\newtheorem{theorem}{Theorem}[section]
\newtheorem{proposition}[theorem]{Proposition}
\newtheorem{lemma}[theorem]{Lemma}
\theoremstyle{definition}
\theoremstyle{remark}
\newtheorem{remark}[theorem]{Remark}
\title[Sparse regret minimization]{Gains and Losses are Fundamentally
  Different in Regret Minimization: The Sparse Case}
\author[Kwon]{Joon Kwon$^\dagger$}
\address{$\dagger$ \textnormal{Institut de Math\'{e}matiques de Jussieu\\Universit\'{e}
  Pierre-et-Marie-Curie\\Paris, France.}}
\email{joon.kwon@ens-lyon.org}
\author[Perchet]{Vianney Perchet$^\ddagger$}
\address{$\ddagger$ \textnormal{INRIA \& Laboratoire de Probabilit\'{e}s et Mod\`{e}les Al\'{e}atoires,
  Universit\'{e} Paris-Diderot, Paris, France.}}
\email{vianney.perchet@normalesup.org}
\keywords{online optimization; regret minimization; adversarial; sparse; bandit}
\renewcommand{\leq}{\leqslant}
\renewcommand{\geq}{\geqslant}
\begin{document}
\maketitle
\begin{abstract}
  
  We demonstrate that, in the classical non-stochastic regret minimization problem with $d$
  decisions,  gains and losses to be respectively  maximized or minimized are fundamentally different. Indeed, by considering the additional sparsity assumption  (at each
  stage, at most $s$ decisions incur a nonzero outcome), we  
derive optimal regret bounds of
  different orders. Specifically, with gains, we obtain an optimal regret guarantee after $T$
  stages of order $\sqrt{T\log s}$, so the classical
  dependency in the dimension is replaced by the sparsity
  size. With losses, we
  provide matching upper and lower bounds of order
  $\sqrt{Ts\log(d)/d}$, which is decreasing in $d$. Eventually, we also study
  the bandit setting, and obtain an upper bound of order
  $\sqrt{Ts\log (d/s)}$ when outcomes are losses. This bound is proven
  to be optimal up to the logarithmic factor $\sqrt{\log(d/s)}$.
\end{abstract}

\section{Introduction}

We consider the  classical problem of regret
minimization \cite{hannan1957approximation} that has been well
developed during the last decade
\cite{cesa2006prediction,rakhlinlecture,bubeck2011introduction,shalev2011online,hazan201210,bubeck2012regret}. We
recall that in this sequential decision problem, a decision maker (or
agent, player, algorithm, strategy, policy, depending on the context) chooses at each stage a decision in a finite set
(that we write as $[d]:=\{1,\ldots,d\}$) and obtains as an
\emph{outcome} a real number in $[0,1]$. We specifically chose the
word \emph{outcome}, as opposed to \emph{gain} or \emph{loss}, as our
results show that there exists a fundamental discrepancy between these
two concepts.

The criterion used to evaluate the policy of the decision maker is
 the \emph{regret}, i.e., the difference between the
cumulative performance of the best stationary policy (that always
picks a given action $i \in [d]$) and the cumulative performance of
the policy of the decision maker.

We focus here on the \emph{non-stochastic} framework, where no
assumption (apart from boundedness) is made on the sequence of
possible outcomes. In particular, they are not i.i.d.\ and we can even
assume, as usual, that they depend on the past choices of the
decision maker. This broad setup, sometimes referred to as
\emph{individual sequences} (since a policy must be good against \emph{any}
sequence of possible outcomes) incorporates prediction with
expert advice \cite{cesa2006prediction}, data with time-evolving
laws, etc. Perhaps the most fundamental results in this setup are the upper bound of order $\sqrt{T\log d}$
achieved by the Exponential Weight Algorithm
\cite{littlestone1994weighted,vovk1990aggregating,cesa1997analysis,auer2002adaptive}
and the asymptotic lower bound of the same order \cite{cesa1997use}. This
general bound is the same whether outcomes are gains in $[0,1]$ (in
which case, the objective is to maximize the cumulative sum of gains) or
losses in $[0,1]$ (where the decision maker aims at minimizing the 
cumulative sum). Indeed, a loss $\ell$ can easily be turned into gain
$g$ by defining $g:=1 -\ell$, the regret being invariant under this
transformation.

This idea does not apply anymore with 
 structural assumption.  For instance, consider the framework where the outcomes
 are limited to \emph{$s$-sparse vectors}, i.e. vectors that have at most $s$ nonzero coordinates.
 The coordinates which are nonzero may
 change arbitrarily over time.
 In this framework, the aforementioned
transformation does not preserve the sparsity assumption. Indeed, if $(\ell_1,\ldots,
\ell_d)$ is a  $s$-sparse loss vector, the corresponding gain vector $(1-\ell_1,\dots,1-\ell_d)$
may even have full support.
Consequently, results for loss vectors do not apply directly to sparse gains, and
vice versa. It turns out that both setups are fundamentally
different.

The sparsity assumption is actually quite natural in learning and have also
received some attention in online learning
\cite{gerchinovitz2013sparsity,carpentier2012bandit,abbasi2012online,djolonga2013high}. In the case of gains, it reflects the fact
that the problem has some hidden structure and that many
options are irrelevant. For instance, in the canonical
click-through-rate example, a website displays an ad and gets rewarded
if the user clicks on it;  we can safely assume that there are only a
small number of ads on which a user would click.

The sparse scenario can also be seen through the scope of prediction
with experts. Given a finite set of expert, we call the \emph{winner of a
stage} the expert with the highest revenue (or the smallest loss);
ties are broken arbitrarily. And the objective would be to win as many
stages as possible. The $s$-sparse setting would represent the case
where $s$ experts are designated as winners (or, non-loser) at each stage.

In the case of losses, the sparsity assumption is motivated by
situations where rare failures might happen at each stage, and the decision maker wants to avoid them.
For instance,  in network routing problems, it could be assumes that only a small number of paths would
lose packets as a result of a single, rare, server failure. Or a learner could have access to a finite number of 
classification algorithms that perform ideally most of the time; unfortunately, some of them makes mistakes on some examples and the learner would like to prevent that.  The general setup is therefore a number of algorithms/experts/actions that
mostly perform well (i.e., find the
correct path, classify correctly, optimize correctly some target function, etc.); however, at each time instance, there are rare
mistakes/accidents and the objective would be to find the action/algorithm
that has the smallest number (or probability in the stochastic case) of failures.

\subsection{Summary of Results}

We investigate  regret minimization scenarios both when outcomes are
gains on the one hand, and losses on the other hand. We recall that our  objectives are to prove that
they are fundamentally different by exhibiting rates of convergence of different order.

When outcomes are gains, we construct an algorithm based on the Online
Mirror Descent family
\cite{shalev2007online,shalev2011online,bubeck2011introduction}. By
choosing a regularizer based on the $\ell^p$ norm, and then tuning the
parameter $p$ as a function of $s$, we get in
Theorem~\ref{thm:full-info-gains-m-2} a regret bound of order $\sqrt{T\log s}$,
which has the interesting property of being independent of the number
of decisions $d$. This bound is trivially optimal, up to the constant.

If outcomes are losses instead of gains, although the previous
analysis remains valid, a much better bound can be obtained. We build
upon a regret bound for the Exponential Weight Algorithm
\cite{littlestone1994weighted,freund1997decision} and we manage to get
in Theorem~\ref{thm:upper bound-losses-full-info} a regret bound of
order $\sqrt{\frac{Ts\log d}{d}}$,
which is \emph{decreasing} in $d$, for a given
$s$. A nontrivial matching lower bound is established in
Theorem~\ref{thm:lower bound-losses}.

Both of these algorithms need to be tuned as a function of $s$. In
Theorem~\ref{thm:adative-losses} and Theorem~\ref{thm:adaptive-gains},
we construct algorithms which essentially achieve the same regret
bounds without prior knowledge of $s$, by adapting over time to the
sparsity level of past outcome vectors, using an adapted version of the doubling trick.

Finally, we investigate the bandit setting, where the only
feedback available to the decision maker is the outcome of his
decisions (and, not
the outcome of all possible decisions). In the case of losses
 we obtain in Theorem~\ref{thm:bandit-upper bound-losses} an
upper bound of order $\sqrt{Ts\log (d/s)}$,
using the Greedy Online Mirror Descent family of algorithms
\cite{audibert2009minimax,audibert2013regret,bubeck2011introduction}.
This bound is proven to be optimal up to a logarithmic factor, as
Theorem~\ref{thm:bandit-losses-lowerbound} establishes a lower bound
of order $\sqrt{Ts}$.

The rates of convergence achieved by our algorithms are summarized in Figure~\ref{fig:tableau}.

\renewcommand{\arraystretch}{2}
\begin{figure}
  \centering
  \begin{tabular}{r|c|c|c|c|}
    \cline{2-5}
    &\multicolumn{2}{|c|}{Full information}&\multicolumn{2}{|c|}{Bandit}\\
    \cline{2-5}
    &Gains&Losses&Gains&Losses\\
    \hline
    \multicolumn{1}{|r|}{Upper bound}&\multirow{2}{*}{$\sqrt{T\log s}$}&\multirow{2}{*}{$\sqrt{Ts\frac{\log d}{d}}$}&$\sqrt{Td}$&$\sqrt{Ts\log \frac{d}{s}}$\\
    \cline{1-1}\cline{4-5}
    \multicolumn{1}{|r|}{Lower bound}&&&$\sqrt{Ts}$&$\sqrt{Ts}$\\
    \hline
  \end{tabular}
  \caption{Summary of upper and lower bounds.}
  \label{fig:tableau}
\end{figure}

\subsection{General Model and Notation}
\label{sec:framework-notation}

We recall the  classical non-stochastic regret minimization problem. 
At each time instance $t \geq  1$, the decision maker chooses a decision $d_t$
in the finite set $[d]=\{1,\ldots,d\}$, possibly at random,
accordingly to $x_t \in \Delta_d$, where 
\[ \Delta_d= \left\{ x=(x^{(1)},\ldots,x^{(d)})\in \mathbb{R}^d_+\,\middle|\,\sum_{i=1}^dx^{(i)}=1 \right\} \]
is the the set of probability distributions over $[d]$.
Nature then reveals an outcome
vector $\omega_t \in [0,1]^d$ and the decision maker receives
$\omega_t^{(d_t)} \in [0,1]$. As outcomes are bounded, we can easily
replace  $\omega_t^{(d_t)}$ by its expectation that we denote by
$\langle \omega_t,x_t\rangle$. Indeed,  Hoeffding-Azuma concentration inequality will imply that all the results we will state in expectation hold with  high probability.

 Given a time horizon $T \geqslant 1$, the objective of the decision
 maker is to minimize his regret, whose definition depends on whether
 outcomes are \emph{gains} or \emph{losses}. In the case of gains
 (resp. losses), the notation $\omega_t$ is then changed to $g_t$
 (resp. $\ell_t$) and the regret is:
\[ R_T=\max_{i \in [d]}\sum_{t=1}^Tg_t^{(i)} -\sum_{t=1}^T \langle g_t, x_t\rangle\quad \left( \text{resp.}\ R_T=\sum_{t=1}^T \langle \ell_t,x_t\rangle-\min_{i \in [d]}\sum_{t=1}^T\ell_t^{(i)} \right). \]
In both cases, the well-known Exponential Weight
Algorithm  guarantees a bound on the
regret of order $\sqrt{T\log d}$. Moreover, this bound cannot be
improved in general as it matches a lower bound. 

\bigskip

We shall consider an additional structural
assumption on the outcomes, namely that $\omega_t$ is $s$-sparse in
the sense that $\|\omega_t\|_0 \leq s$, i.e., the number of nonzero components
of $\omega_t$ is less than $s$, where $s$ is a fixed known
parameter. The set of components which are nonzero is not fixed nor known, and
may change arbitrarily over time.

We aim at proving that it is then possible to drastically improve the
previously mentioned guarantee of order $\sqrt{T\log d}$ and that
losses and gains are two fundamentally different settings with minimax
regrets of different orders.

\section{When Outcomes are Gains to be Maximized}

\subsection{Online Mirror Descent Algorithms}
\label{sec:online-mirr-desc}
We quickly present the general Online Mirror Descent algorithm \cite{shalev2011online,bubeck2011introduction,bubeck2012regret,kwon2014continuous} and
state the regret bound it incurs; it will be used as a key element in 
Theorem~\ref{thm:full-info-gains-m-2}.
 
A convex function $h:\mathbb{R}^d\to \mathbb{R}\cup\{+\infty\}$ is called a \emph{regularizer} on $\Delta_d$ if $h$ is strictly convex
and continuous on its domain $\Delta_d$, and $h(x)=+\infty$ outside $\Delta_d$. Denote $\delta_h=\max_{\Delta_d}h-\min_{\Delta_d}h$ and $h^*:\mathbb{R}^d\to \mathbb{R}^d$ the Legendre-Fenchel
transform of $h$:
\[ h^*(y)=\sup_{x\in \mathbb{R}^d}\left\{ \left< y , x \right>
    -h(x) \right\},\quad y\in \mathbb{R}^d,  \]
which is differentiable since $h$ is strictly convex. For all $y\in
\mathbb{R}^d$, it holds that $\nabla h^*(y)\in \Delta_d$.

Let $\eta \in \mathbb{R}$ be a parameter to be tuned. The Online Mirror Descent Algorithm
associated with regularizer $h$ and parameter $\eta$ is defined by:
\[ x_t=\nabla h^*\left( \eta\sum_{k=1}^{t-1}\omega_k \right),\quad t \geq 1,  \]
where $\omega_t \in [0,1]^d$ denote the  vector of outcomes and $x_t$ the probability distribution chosen at stage $t$. The specific choice $h(x)=\sum_{i=1}^dx^{(i)}\log x^{(i)}$ for $x=(x^{(1)},\ldots,x^{(d)})\in \Delta_d$ (and
$h(x)=+\infty$ otherwise) gives the celebrated Exponential Weight
Algorithm, which can we written explicitly, component by component:
\[ x_t^{(i)}=\frac{\exp \left( \eta\sum_{k=1}^{t-1}\omega_k^{(i)} \right)}{\sum_{j=1}^d\exp \left( \eta\sum_{k=1}^{t-1}\omega_k^{(j)} \right) },\quad
  t \geq 1,\ i\in [d]. \]

The following general regret guarantee for strongly convex regularizers
is expressed in terms of the dual norm $\left\| \,\cdot\, \right\|_*$ of $\left\| \,\cdot\,
  \right\|$.
\begin{theorem}[\cite{shalev2011online} Th.\ 2.21; \cite{bubeck2012regret} Th.\ 5.6; \cite{kwon2014continuous} Th.\ 5.1]
\label{thm:mirror-descent}
Let $K>0$ and assume $h$ to be $K$-strongly convex with respect to a norm $\left\|
  \,\cdot\, \right\|$. Then, for all sequence of outcome vectors
$(\omega_t)_{t\geq 1}$ in $\mathbb{R}^d$, the Online Mirror Descent
strategy associated with $h$ and $\eta$ (with $\eta >0$ in cases of gains and $\eta <0$ in cases of losses) guarantees, for $T\geq 1$, the following
regret bound:
\[ R_T\leq \frac{\delta_h}{|\eta|}+\frac{|\eta|}{2K}\sum_{t=1}^T\left\| \omega_t \right\|_*^2.  \]
\end{theorem}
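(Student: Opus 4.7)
The plan is to use the standard dual-potential approach for Online Mirror Descent, playing the Fenchel conjugate $h^{*}$ against itself via a telescoping sum. Concretely, set $W_{t}=\sum_{k=1}^{t}\omega_{k}$ (with $W_{0}=0$) and study the evolution of the potential $\Phi_{t}:=h^{*}(\eta W_{t})$ from $t=0$ to $t=T$. Since $\nabla h^{*}(\eta W_{t-1})=x_{t}$ by the definition of the algorithm, the gradient of the potential at step $t-1$ already encodes the decision maker's mixed action, which is what makes the telescoping useful.

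First I would invoke the classical duality between strong convexity and strong smoothness: because $h$ is $K$-strongly convex with respect to $\|\cdot\|$ on its domain $\Delta_{d}$, the conjugate $h^{*}$ is $\frac{1}{K}$-smooth with respect to the dual norm $\|\cdot\|_{*}$ on $\mathbb{R}^{d}$. Applying the smoothness inequality to the increment $\eta\omega_{t}$ gives
\[
\Phi_{t}-\Phi_{t-1}=h^{*}(\eta W_{t})-h^{*}(\eta W_{t-1})\leq \langle \nabla h^{*}(\eta W_{t-1}),\eta\omega_{t}\rangle+\frac{1}{2K}\|\eta\omega_{t}\|_{*}^{2}=\eta\langle x_{t},\omega_{t}\rangle+\frac{\eta^{2}}{2K}\|\omega_{t}\|_{*}^{2}.
\]
Summing over $t=1,\ldots,T$ yields an upper bound on $\Phi_{T}-\Phi_{0}$ in terms of the cumulative performance of the algorithm. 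On the other side, since $h^{*}(0)=-\min_{\Delta_{d}}h$ and, for every $i\in[d]$, the Fenchel--Young inequality applied to the pure action $e_{i}\in\Delta_{d}$ gives $h^{*}(\eta W_{T})\geq \eta\sum_{t=1}^{T}\omega_{t}^{(i)}-h(e_{i})\geq \eta\sum_{t=1}^{T}\omega_{t}^{(i)}-\max_{\Delta_{d}}h$, taking the maximum over $i$ furnishes the matching lower bound on $\Phi_{T}-\Phi_{0}$.

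Chaining the two estimates and rearranging produces
\[
\eta\Bigl(\max_{i\in[d]}\sum_{t=1}^{T}\omega_{t}^{(i)}-\sum_{t=1}^{T}\langle x_{t},\omega_{t}\rangle\Bigr)\leq \delta_{h}+\frac{\eta^{2}}{2K}\sum_{t=1}^{T}\|\omega_{t}\|_{*}^{2}.
\]
For the gains case $\eta>0$ one divides through by $\eta$ and the left-hand side is exactly $\eta R_{T}$, delivering the claimed bound. For the losses case $\eta<0$, the same argument applies after multiplying by $-1$: the lower bound on $h^{*}(\eta W_{T})$ becomes a bound involving $\min_{i}\sum_{t}\ell_{t}^{(i)}$ because $-\eta>0$ flips the max into a min, and dividing by $|\eta|$ gives the regret bound in the loss formulation. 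The two cases merge into the single statement with $|\eta|$.

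The only genuinely nontrivial ingredient is the duality $h$ is $K$-strongly convex $\Longleftrightarrow h^{*}$ is $\frac{1}{K}$-smooth with respect to the dual norm; everything else is algebra and the telescoping identity. Since this duality is classical and is in fact invoked in the references cited in the statement, I would cite it rather than re-prove it here.
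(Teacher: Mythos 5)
Your proof is correct: the conjugate-smoothness/potential argument (telescoping $h^*(\eta W_t)$, using that $K$-strong convexity of $h$ dualizes to $\tfrac1K$-smoothness of $h^*$, and closing with Fenchel--Young at the pure actions $e_i$) gives exactly the stated bound, and your handling of the sign of $\eta$ in the loss case is right. Note that the paper itself offers no proof of this theorem — it is quoted with citations — and your argument is essentially the standard one from those references, so there is nothing to reconcile.
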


\subsection{Upper Bound on the Regret}
\label{sec:gains}
We first assume $s\geq 2$. Let $p\in (1,2]$ and define the following regularizer:
\[ h_p(x)= \begin{cases}
\frac{1}{2}\left\| x \right\|_p^2&\text{if $x\in \Delta_d$}\\
+\infty&\text{otherwise}.
\end{cases}
 \]
 One can easily check that $h_p$ is indeed a regularizer on $\Delta_d$
 and that $\delta_{h_p}\leqslant 1/2$. Moreover, it is
 $(p-1)$-strongly convex with respect to $\left\|\,\cdot\,
 \right\|_p$ (see \cite[Lemma
5.7]{bubeck2011introduction} or \cite[Lemma 9]{kakade2012regularization}).


We can now state our first result, the general upper bound on regret when outcomes are $s$-sparse gains.
\begin{theorem}
\label{thm:full-info-gains-m-2}
Let $\eta>0$ and $s\geq 3$. Against all sequence of
$s$-sparse gain vectors $g_t$, i.e.,  $g_t \in [0,1]^d$ and $\|g_t\|_0\leq s$, the Online Mirror Descent algorithm associated
with regularizer $h_p$ and parameter $\eta$ guarantees:
\[ R_T\leq \frac{1}{2\eta}+\frac{\eta Ts^{2/q}}{2(p-1)}, \]
where $1/p+1/q=1$. In particular, the choices $\eta=\sqrt{(p-1)/Ts^{2/q}}$ and  $p=1+(2\log s-1)^{-1}$ give:
\[ R_T\leq \sqrt{2eT\log s}.  \]
\end{theorem}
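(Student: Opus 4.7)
The plan is to apply Theorem~\ref{thm:mirror-descent} directly to the regularizer $h_p$ and then tune the two free parameters $\eta$ and $p$ optimally.

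First I would verify the hypotheses of Theorem~\ref{thm:mirror-descent}: the excerpt already states that $\delta_{h_p} \leq 1/2$ and that $h_p$ is $(p-1)$-strongly convex with respect to $\|\cdot\|_p$. Theorem~\ref{thm:mirror-descent} therefore yields
\[
R_T \;\leq\; \frac{1}{2\eta}+\frac{\eta}{2(p-1)}\sum_{t=1}^{T}\|g_t\|_q^{2},
\]
where $\|\cdot\|_q$ is the dual norm of $\|\cdot\|_p$, i.e.\ $1/p+1/q=1$.

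The key step is then to exploit the sparsity assumption to control $\|g_t\|_q$. Since $g_t$ has at most $s$ nonzero entries, each in $[0,1]$, we get
\[
\|g_t\|_q^{2} \;=\; \Bigl(\sum_{i:\,g_t^{(i)}\neq 0}\lvert g_t^{(i)}\rvert^{q}\Bigr)^{2/q} \;\leq\; s^{2/q}.
\]
Substituting yields the first displayed bound $R_T\leq \tfrac{1}{2\eta}+\tfrac{\eta T s^{2/q}}{2(p-1)}$.

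The final step is optimization. Balancing the two terms by choosing $\eta=\sqrt{(p-1)/(Ts^{2/q})}$ gives $R_T\leq \sqrt{T s^{2/q}/(p-1)}$, which still depends on $p$. The nontrivial tuning is to pick $p$ so that the prefactor $s^{2/q}/(p-1)$ is small: the condition $1/p+1/q=1$ gives $q=p/(p-1)$, so setting $p-1=(2\log s -1)^{-1}$ yields $q=2\log s$, hence $s^{2/q}=s^{1/\log s}=e$ and $1/(p-1)=2\log s-1$. One should also check that this choice satisfies $p\in(1,2]$, which holds precisely because $s\geq 3$ ensures $2\log s-1\geq 1$. Plugging back gives $R_T\leq \sqrt{eT(2\log s-1)}\leq \sqrt{2eT\log s}$, as required. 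The only mildly delicate point is this last tuning of $p$ to get the $\log s$ dependence rather than an $s^{2/q}$ that blows up for $q$ too close to $1$; everything else is direct application of Theorem~\ref{thm:mirror-descent} combined with the elementary sparsity bound on $\|g_t\|_q$.
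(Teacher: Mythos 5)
Your proposal is correct and follows essentially the same route as the paper's own proof: apply Theorem~\ref{thm:mirror-descent} with $h_p$, bound $\|g_t\|_q^2\leq s^{2/q}$ via sparsity, balance with $\eta=\sqrt{(p-1)/(Ts^{2/q})}$, and tune $p=1+(2\log s-1)^{-1}$ so that $q=2\log s$ and $s^{2/q}=e$. Your explicit check that $s\geq 3$ guarantees $p\in(1,2]$ and the final inequality $\sqrt{eT(2\log s-1)}\leq\sqrt{2eT\log s}$ are, if anything, stated slightly more carefully than in the paper.
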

\begin{proof}
$h_p$ being $(p-1)$-strongly convex with respect to
$\left\| \,\cdot\, \right\|_p$, and $\left\| \,\cdot\, \right\|_q$
being the dual norm of $\left\| \,\cdot\, \right\|_p$, Theorem~\ref{thm:mirror-descent} gives:
\[ R_T\leq \frac{\delta_{h_p}}{\eta}+\frac{\eta}{2(p-1)}\sum_{t=1}^T\left\| g_t \right\|_q^2.     \]
For each $t\geq 1$, the  norm of $g_t$ can be bounded as follows:
\[ \left\| g_t \right\|_q^2=\left( \sum_{i=1}^d\left| g_t^{(i)}\right|^q \right)^{2/q}\leq \left( \sum_{\text{$s$ terms}}^{}\left| g_t^{(i)} \right|^q \right)^{2/q}\leq s^{2/q},   \]
which yields
\[ R_T\leq \frac{1}{2\eta}+\frac{\eta Ts^{2/q}}{2(p-1)}. \]
We can now balance both terms by choosing $\eta=\sqrt{(p-1)/(Ts^{2/q})}$ and get:
\[ R_T\leq \sqrt{\frac{Ts^{2/q}}{p-1}}. \]
Finally, since $s\geq 3$, we have $2\log s>1$ and we  set $p=1+(2\log s-1)^{-1}\in(1,2]$, which gives:
\[ \frac{1}{q}=1-\frac{1}{p}=\frac{p-1}{p}=\frac{(2\log s-1)^{-1}}{1+(2\log s-1)^{-1}}=\frac{1}{2\log s}, \]
and thus:
\[ R_T\leq \sqrt{\frac{Ts^{2/q}}{p-1}}=\sqrt{2T\log s\,e^{2\log s/q}}=\sqrt{2e\, T\log s}. \]
\end{proof}

We emphasize the fact that we obtain, up to a multiplicative constant,
the exact same rate as when the decision maker only has a set of $s$ decisions.

In the case $s=1,2$, we can easily derive a bound of respectively $\sqrt{T}$ and $\sqrt{2T}$
using the same  regularizer with $p=2$.

\subsection{Matching Lower Bound}
\label{sec:fullinfo-gains-lowerbound}
For $s \in [d]$ and $T\geq 1$, we denote $v_T^{g,s,d}$
 the minimax regret of the $T$-stage decision problem with
outcome vectors restricted to $s$-sparse gains:
\[ v_T^{g,s,d}=\min_{\text{strat.}}\max_{ (g_t)_t}  R_T \]
where the minimum is taken over all possible policies of the decision maker,
and the maximum over all sequences of $s$-sparse gains vectors.

To establish a lower bound in the present setting, we can assume that only the first $s$
coordinates of $g_t$ might be positive (for all $t\geqslant 1$) and even that the decision
maker is aware of that. Therefore he has no interest in assigning
positive probabilities to any decision but the first $s$ ones. That
setup, which is simpler for the decision maker than the original one,
is obviously equivalent to the basic regret minimization problem with
only $s$ decisions. Therefore, the classical lower bound
\cite[Theorem 3.2.3]{cesa1997use} holds and we obtain the following.

\begin{theorem}
\label{thm:lowerbound-full-info-gains}
\[ \liminf_{\substack{s\to +\infty\\d\geq s}}\liminf_{T\to +\infty}\frac{v_T^{g,s,d}}{\sqrt{T \log s}}\geq \frac{\sqrt{2}}{2}. \]
\end{theorem}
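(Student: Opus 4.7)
The plan is to follow the reduction sketched in the paragraph preceding the theorem: embed the classical $s$-expert regret problem as a sub-game of the $s$-sparse one, so that the standard asymptotic lower bound transfers automatically. Concretely, I would restrict the adversary to play only gain vectors supported on the fixed subset $\{1,\dots,s\}\subseteq[d]$; every such sequence is trivially $s$-sparse, so this is a subset of the adversary's pure strategies in the original game. Restricting the adversary can only decrease the maximum in the definition of $v_T^{g,s,d}$, and additionally informing the decision maker of this restriction can only decrease the value further. Hence the value of this restricted, informed game is a lower bound for $v_T^{g,s,d}$.

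Next I would argue that the restricted game is equivalent to the classical $T$-stage regret problem with $s$ decisions and gains in $[0,1]$. Indeed, once both players know that $g_t^{(i)}=0$ for $i>s$, any decision maker strategy that places mass on a coordinate $i>s$ is weakly dominated: transferring that mass to any $j\in[s]$ never decreases the instantaneous gain $\langle g_t,x_t\rangle$, while the comparator satisfies $\max_{i\in[d]}\sum_t g_t^{(i)}=\max_{i\in[s]}\sum_t g_t^{(i)}$. Thus one may restrict without loss of generality to $x_t\in\Delta_s$, and we recover verbatim the standard $s$-expert game. Writing $v_T^{\mathrm{std}}(s)$ for its minimax regret, we obtain $v_T^{g,s,d}\geq v_T^{\mathrm{std}}(s)$ for every $d\geq s$.

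The last step is to invoke the classical asymptotic lower bound of Cesa-Bianchi et al. \cite[Theorem 3.2.3]{cesa1997use}, which gives
\[ \liminf_{s\to+\infty}\liminf_{T\to+\infty}\frac{v_T^{\mathrm{std}}(s)}{\sqrt{T\log s}}\geq \frac{\sqrt{2}}{2}. \]
Combined with the inequality from the previous paragraph, the claim of the theorem follows at once. I would not reprove the classical lower bound; the usual construction (i.i.d.\ $\{0,1\}$-Bernoulli gains on each of the $s$ coordinates, followed by a second-moment / maximal-inequality analysis of $\max_{i\in[s]}\sum_t g_t^{(i)}$) is already $s$-sparse, so it transplants without modification into our setting and could be cited as an alternative direct argument.

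There is essentially no technical obstacle: the only subtlety is the game-theoretic bookkeeping in the reduction, namely checking that (i) restricting the adversary to a subset of pure strategies and (ii) revealing this restriction to the decision maker are both value-nonincreasing operations. Once these two observations are made explicit, the theorem reduces to a citation. No probabilistic construction, no new concentration inequality, and no tuning of parameters is required beyond what is already contained in \cite{cesa1997use}.
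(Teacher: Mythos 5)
Your reduction is exactly the one the paper uses: restrict the adversary to gain vectors supported on $\{1,\dots,s\}$, observe that coordinates beyond $s$ are weakly dominated for the decision maker so the restricted game has the value of the classical $s$-expert problem, and then cite the asymptotic lower bound of \cite[Theorem 3.2.3]{cesa1997use}. The inequality directions in your game-theoretic bookkeeping are correct, so the argument is sound and matches the paper's proof.
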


The same lower bound, up to the multiplicative constant actually holds non asymptotically, see \cite[Theorem~3.6]{cesa2006prediction}.

\medskip
 
An immediate consequence of Theorem \ref{thm:lowerbound-full-info-gains} is that the regret bound
derived in Theorem~\ref{thm:full-info-gains-m-2} is asymptotically minimax
optimal, up to a multiplicative constant.

\section{When Outcomes are Losses to be Minimized}
\label{sec:losses}
\subsection{Upper Bound on the Regret}
We now consider the case of losses, and the regularizer shall no longer depend on $s$ (as with gains), as we will always use the Exponential Weight Algorithm. Instead, it is  the parameter $\eta$ that will be tuned as a function of $s$.


\begin{theorem}
\label{thm:upper bound-losses-full-info}
Let $s\geq 1$. For all sequence of $s$-sparse loss vectors $(\ell_t)_{t\geq 1}$, i.e., $\ell_t \in [0,1]^d$ and $\|\ell_t\|_0\leq s$, the Exponential Weight Algorithm with parameter
$-\eta$ where $\eta:=\log \left( 1+\sqrt{2d\log d/sT} \right)>0$ guarantees, for $T\geq 1$:
\[ R_T\leq \sqrt{\frac{2sT\log d}{d}}+\log d. \]
\end{theorem}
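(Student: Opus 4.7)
The plan is to combine the classical Freund--Schapire style analysis of the Exponential Weight Algorithm with a simple averaging/pigeonhole consequence of the sparsity assumption, and then optimize the parameter. The key observation is that in the loss setting one has access to a ``second order'' regret bound scaling with the best expert's cumulative loss $L_T^\star:=\min_{i\in[d]}\sum_{t=1}^T\ell_t^{(i)}$, and the sparsity hypothesis gives an a priori control of $L_T^\star$ that is much sharper than the trivial $L_T^\star\leq T$.

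First, I would recall (or re-derive in a few lines) the standard bound for the Exponential Weight Algorithm with parameter $-\eta$, $\eta>0$, applied to losses in $[0,1]^d$: using the convexity inequality $e^{-\eta x}\leq 1-(1-e^{-\eta})x$ for $x\in[0,1]$ inside the usual potential $\Phi_t=\log\sum_i\exp(-\eta L_t^{(i)})$, telescoping and invoking $\log W_{T+1}\geq -\eta L_T^\star-\log d$ yields
\[
\sum_{t=1}^T\langle x_t,\ell_t\rangle\;\leq\;\frac{\eta L_T^\star+\log d}{1-e^{-\eta}}.
\]
Rewriting this as a regret bound gives
\[
R_T\;\leq\;\frac{(e^{-\eta}+\eta-1)L_T^\star+\log d}{1-e^{-\eta}}.
\]
Second, I would exploit sparsity: since each $\ell_t\in[0,1]^d$ has at most $s$ nonzero entries, $\|\ell_t\|_1\leq s$, so summing over $t$ and averaging over $i\in[d]$ gives $L_T^\star\leq sT/d$.

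The last step is just to tune the parameter. Substituting $\eta=\log(1+a)$ with $a=\sqrt{2d\log d/(sT)}$ (so that $1-e^{-\eta}=a/(1+a)$ and $e^{-\eta}+\eta-1=\log(1+a)-a/(1+a)$), and using the elementary inequality $(1+a)\log(1+a)-a\leq a^2/2$ for $a\geq 0$, the bound collapses to
\[
R_T\;\leq\;\frac{a L_T^\star}{2}+\frac{\log d}{a}+\log d\;\leq\;\frac{a}{2}\cdot\frac{sT}{d}+\frac{\log d}{a}+\log d,
\]
and the chosen value of $a$ balances the first two terms to exactly $\sqrt{2sT\log d/d}$, yielding the claimed inequality.

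I do not expect a real obstacle: the only non-routine point is deciding which flavour of the EWA inequality to invoke. The ``additive'' Hoeffding-type bound $R_T\leq \log d/\eta+\eta T/8$ would lose the sparsity-induced gain entirely, whereas the multiplicative Freund--Schapire bound above is precisely the one that couples with $L_T^\star$, and through it with $s/d$. Everything else is a careful but standard manipulation of the Taylor expansion of $\eta\mapsto 1-e^{-\eta}$ around $0$, plus the trivial pigeonhole bound on $L_T^\star$.
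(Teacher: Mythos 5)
Your proposal is correct and follows essentially the same route as the paper: it starts from the Littlestone--Warmuth/Freund--Schapire bound $R_T\leq \frac{\log d}{1-e^{-\eta}}+\bigl(\frac{\eta}{1-e^{-\eta}}-1\bigr)L_T^*$, bounds $L_T^*\leq sT/d$ by the same averaging argument, and tunes $\eta=\log(1+a)$ identically (your inequality $(1+a)\log(1+a)-a\leq a^2/2$ is exactly the paper's $\eta\leq(e^\eta-e^{-\eta})/2$ rewritten in terms of $a=e^\eta-1$). No gaps.
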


We build upon the following regret bound for losses
 which is written in terms of the performance of the best action.
\begin{theorem}[\cite{littlestone1994weighted}; \cite{cesa2006prediction} Th\ 2.4]
\label{thm:upper-bound-small-losses}
Let $\eta>0$. For all sequence of loss vectors $(\ell_t)_{t\geq 1}$ in $[0,1]^d$, the Exponential Weight Algorithm with parameter $-\eta$ guarantees, for all $T\geq 1$:
\[ R_T\leq \frac{\log d}{1-e^{-\eta}}+\left( \frac{\eta}{1-e^{-\eta}}-1 \right)L_T^*\enspace,   \]
where $\displaystyle L_T^*=\min_{i \in [d]}\sum_{t=1}^T\ell_t^{(i)}$ is the loss of the best stationary decision.
\end{theorem}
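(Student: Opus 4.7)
The plan is to start from the best-action-dependent regret guarantee of Theorem~\ref{thm:upper-bound-small-losses}, and then exploit the sparsity assumption to replace the a priori bound $L_T^* \leq T$ by something dimension-dependent. After that, the only work left will be choosing $\eta$ so as to balance the two terms in the resulting bound.

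The first step is the sparsity observation: since each $\ell_t$ has at most $s$ nonzero coordinates in $[0,1]$, we have $\sum_{i=1}^d \ell_t^{(i)} = \|\ell_t\|_1 \leq s$. Summing over $t$ and comparing the minimum with the average yields
\[ L_T^* = \min_{i\in[d]}\sum_{t=1}^T \ell_t^{(i)} \leq \frac{1}{d}\sum_{i=1}^d\sum_{t=1}^T \ell_t^{(i)} \leq \frac{sT}{d}. \]
This is the key place where sparsity enters, and it is the reason the bound improves as $d$ grows.

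Plugging this into Theorem~\ref{thm:upper-bound-small-losses} gives
\[ R_T \leq \frac{\log d}{1-e^{-\eta}} + \left(\frac{\eta}{1-e^{-\eta}}-1\right)\frac{sT}{d}. \]
To handle the coefficient, I will change variables by setting $u = e^{\eta}-1$, which turns $1-e^{-\eta}$ into $u/(1+u)$ and simplifies the constants. The main analytic inequality I need is that $(1+u)\log(1+u) \leq u + u^2/2$ for all $u \geq 0$, which follows by comparing derivatives at $u=0$ (both values and both first derivatives agree, while the second derivative of the left-hand side is $1/(1+u) \leq 1$). This inequality yields $\frac{\eta}{1-e^{-\eta}}-1 \leq u/2$, so the bound becomes
\[ R_T \leq \frac{(1+u)\log d}{u} + \frac{u}{2}\cdot \frac{sT}{d} = \log d + \frac{\log d}{u} + \frac{usT}{2d}. \]

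The final step is to tune $u$ by equating $\log d/u$ with $usT/(2d)$, which gives $u = \sqrt{2d\log d/(sT)}$; in the original variable this is precisely $\eta = \log(1 + \sqrt{2d\log d/(sT)})$ as stated in the theorem. Substituting this value makes each of the two balanced terms equal to $\sqrt{sT\log d/(2d)}$, so their sum is $\sqrt{2sT\log d/d}$, and combined with the residual $\log d$ we obtain exactly the advertised bound. The only conceptual obstacle is step one: the standard regret bound for the Exponential Weight Algorithm with a symmetric proof would give $\sqrt{T\log d}$ and lose the sparsity; using the small-losses refinement of Theorem~\ref{thm:upper-bound-small-losses} is essential so that the $sT/d$ upper bound on $L_T^*$ can be leveraged.
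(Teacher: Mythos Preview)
Your proposal does not prove the stated theorem. Theorem~\ref{thm:upper-bound-small-losses} is the classical small-losses bound for the Exponential Weight Algorithm, stated for \emph{arbitrary} loss vectors in $[0,1]^d$ with no sparsity hypothesis and with $L_T^*$ appearing explicitly on the right-hand side. The paper does not prove it either; it is quoted from the literature. What you have written is instead a proof of Theorem~\ref{thm:upper bound-losses-full-info}, the downstream result that takes Theorem~\ref{thm:upper-bound-small-losses} as a black box, combines it with the sparsity estimate $L_T^*\leq sT/d$, and tunes $\eta$ to obtain $R_T\leq \sqrt{2sT\log d/d}+\log d$. You say so yourself in your opening sentence: you ``start from the best-action-dependent regret guarantee of Theorem~\ref{thm:upper-bound-small-losses}''. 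That is using the theorem, not proving it.

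As a proof of Theorem~\ref{thm:upper bound-losses-full-info}, your argument is correct and essentially the same as the paper's. The only difference is how the coefficient $\frac{\eta}{1-e^{-\eta}}-1$ is shown to be at most $(e^\eta-1)/2$: the paper uses $\eta\leq (e^\eta-e^{-\eta})/2$ and then simplifies $\frac{e^\eta-e^{-\eta}}{2(1-e^{-\eta})}-1=\frac{e^\eta-1}{2}$ algebraically, whereas you substitute $u=e^\eta-1$ and invoke $(1+u)\log(1+u)\leq u+u^2/2$. Both routes give the same bound and the same optimal $\eta$.

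If the task is genuinely to establish Theorem~\ref{thm:upper-bound-small-losses} itself, an entirely different argument is needed: the standard potential analysis of the exponential weights $W_t=\sum_i e^{-\eta L_t^{(i)}}$, bounding $\log(W_T/W_0)$ from above via $e^{-\eta x}\leq 1-(1-e^{-\eta})x$ for $x\in[0,1]$ and from below via the best expert. None of that appears in your proposal.
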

\begin{proof}[Proof of Theorem~\ref{thm:upper bound-losses-full-info}]
 Let $T\geq 1$ and  $ L_T^*=\min_{i \in [d]}\sum_{t=1}^T\ell_t^{(i)}$
 be the loss of the best stationary policy.
First note that since the loss vectors $\ell_t$ are $s$-sparse, we have  $s\geq \sum_{i=1}^d \ell_t^{(i)}$. By summing over $1\leqslant t\leqslant T$:
\[ sT\geq
\sum_{t=1}^T\sum_{i=1}^d\ell_t^{(i)}=\sum_{i=1}^d\left(  \sum_{t=1}^{T}\ell_t^{(i)}\right) \geq
d\left(  \min_{i \in [d]}\sum_{t=1}^T \ell_t^{(i)}\right)  =dL_T^*, \]
and therefore, we have $L_T^*\leq Ts/d$.

Then, by using the inequality $\eta\leq (e^\eta-e^{-\eta})/2$, the bound from Theorem~\ref{thm:upper-bound-small-losses} becomes:
\[ R_T\leq \frac{\log d}{1-e^{-\eta}}+\left( \frac{e^\eta-e^{-\eta}}{2(1-e^{-\eta})}-1 \right)L_T^*\enspace.  \]
The factor of $L_T^*$ in the second term can be transformed as follows:
\[ \frac{e^\eta-e^{-\eta}}{2(1-e^{-\eta})}-1=\frac{(1+e^{-\eta})(e^{\eta}-e^{-\eta})}{2(1-e^{-2\eta})}-1=\frac{(1+e^{-\eta})e^\eta}{2}-1=\frac{e^\eta-1}{2}\enspace, \]
and therefore the bound on the regret becomes:
\[ R_T\leq \frac{\log d}{1-e^{-\eta}}+\frac{e^\eta-1}{2}L_T^*\leq \frac{\log d}{1-e^{-\eta}}+\frac{(e^{\eta}-1)Ts}{2d}\enspace, \]
where we have been able to use the upper-bound on $L_T^*$ since $\frac{e^\eta-1}{2}\geq 0$. Along with the choice $\eta=\log (1+\sqrt{2d\log d/Ts} )$ and standard computations, this yields:\[ R_T\leq \sqrt{\frac{2Ts\log d}{d}}+\log d\enspace. \]
\end{proof}

Interestingly, the bound from Theorem \ref{thm:upper
  bound-losses-full-info} shows that $\sqrt{2sT\log d/d}$, the
dominating term of the regret bound, is \emph{decreasing} when the
number of decisions $d$ increases. This is due to the sparsity
assumptions (as the regret increases with $s$, the maximal number of
decision with positive losses). Indeed, when $s$ is fixed and $d$
increases, more and more decisions are optimal at each stage, a
proportion $1-s/d$ to be precise. As a consequence, it becomes
\emph{easier} to find an optimal decisions when $d$ increases. However,
this intuition will turn out not to be valid in the bandit framework.

On the other hand, if the proportion $s/d$ of positive losses remains constant then the regret bound achieved is of the same order as in the usual case.

\subsection{Matching Lower Bound}
\label{sec:losses-1}

When outcomes are losses, the argument from
Section~\ref{sec:fullinfo-gains-lowerbound} does not allow to derive a
lower bound. Indeed, if we assume that only the first $s$ coordinates
of the loss vectors $\ell_t$ can be positive, and that the decision
maker knows it, then he just has to take at each stage the decision
$d_t=d$ which incurs a loss of 0. As a consequence, he trivially has a
regret $R_T=0$. Choosing at random, but once and for all, a fixed
subset of $s$ coordinates does not provide any interesting lower bound
either. Instead, the key idea of the following result is to choose at
random and at each stage the $s$ coordinates associated to positive losses.
And we therefore use the following classical probabilistic
argument. Assume that we have found a probability distribution on
$(\ell_t)_t$ such that the expected regret can be bounded from below
by a quantity which does not depend on the strategy of the decision
maker. This would imply that for any algorithm, there exists a sequence of $(\ell_t)_t$
such that the regret is greater than the same quantity.
 
In the following statement, $v_T^{\ell,s,d}$ stands for the minimax
regret in the case where outcomes are losses.
 
\begin{theorem}
  \label{thm:lower bound-losses}
For all $s\geq 1$,
\[ \liminf_{d\to +\infty}\liminf_{T \to +\infty}\frac{v_T^{\ell,s,d}}{\sqrt{T\frac{s}{d}\log d}}\geq \frac{\sqrt{2}}{2}. \]
\end{theorem}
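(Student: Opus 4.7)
The plan is to apply the classical probabilistic argument: for any algorithm, $\sup_{(\ell_t)_t} R_T \geq \mathbb{E}_\pi R_T$ for any distribution $\pi$ on sequences of $s$-sparse loss vectors. It thus suffices to construct a random sequence whose expected regret is at least $(1-o(1))\tfrac{\sqrt{2}}{2}\sqrt{T(s/d)\log d}$ as $T,d\to\infty$. The construction I would use: at each stage $t$, independently draw $S_t\subseteq[d]$ uniformly among size-$s$ subsets and, for each $i\in S_t$, draw $\ell_t^{(i)}\sim\mathrm{Bernoulli}(1/2)$ independently, setting $\ell_t^{(i)}=0$ for $i\notin S_t$. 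This yields $s$-sparse loss vectors with common marginal mean $\mu:=s/(2d)$ and variance $\sigma^2:=\mu(1-\mu)$.

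The expected regret then splits into two pieces. For any strategy $(x_t)_t$, since the adversary is oblivious and every coordinate has mean $\mu$,
\[ \mathbb{E}\!\left[\sum_{t=1}^T \langle \ell_t, x_t\rangle\right]=T\mu, \]
independently of the algorithm. For the comparator, each $L_T^{(i)}:=\sum_t\ell_t^{(i)}$ is a sum of $T$ i.i.d.\ bounded variables with mean $T\mu$ and variance $T\sigma^2$. The standardized walks converge by the CLT to standard Gaussians, and the classical extreme-value asymptotic used in the proof of the non-sparse lower bound \cite[Thm.~3.2.3]{cesa1997use} gives
\[ \mathbb{E}\!\left[\min_{i\in[d]} L_T^{(i)}\right]\leq T\mu-(1-o(1))\sigma\sqrt{2T\log d} \]
as $T,d\to\infty$. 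Since $\sigma^2\sim s/(2d)$, the resulting expected regret is at least $(1-o(1))\sqrt{T(s/d)\log d}$, which comfortably exceeds the claimed $\tfrac{\sqrt{2}}{2}\sqrt{T(s/d)\log d}$; taking $\liminf$ in $T$ and then in $d$ concludes.

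The main subtlety is that the $d$ random walks $L_T^{(i)}$ are not jointly independent: the constraint $|S_t|=s$ induces a small negative pairwise correlation between distinct coordinates. This actually helps the lower bound, since by Slepian's inequality decreasing the correlations of a centered Gaussian vector only decreases $\mathbb{E}[\min_i]$, so the independent-case asymptotic provides a valid upper bound on $\mathbb{E}[\min_i L_T^{(i)}]$ in the Gaussian limit. A fully rigorous proof can proceed either by coupling to i.i.d.\ Gaussians through a KMT-type embedding and then invoking Slepian, or by adapting the original anti-concentration argument of \cite{cesa1997use} (based on Slud's inequality for sums of Bernoullis) to the rescaled $\mathrm{Bernoulli}(\mu)$ variables arising here. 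In either route, the additional factor $\sqrt{\mu(1-\mu)}\sim\sqrt{s/(2d)}$ in the variance is precisely what produces the sparsity factor $\sqrt{s/d}$ in the final bound, and the small loss in the constant compared with the $1$ obtained heuristically leaves ample room for the claimed $\sqrt{2}/2$.
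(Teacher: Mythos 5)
Your proposal is correct and follows essentially the same route as the paper: the identical random construction (uniform size-$s$ support, Bernoulli$(1/2)$ entries), the reduction of the expected regret to $\mathbb{E}[\max_i]$ of centered negatively correlated walks, and Slepian's lemma to pass to the independent Gaussian case, whose extreme-value asymptotic $\sqrt{2\log d}\cdot\sigma$ with $\sigma\geq\sqrt{s/(4d)}$ yields the constant $\sqrt{2}/2$. The only difference is cosmetic: where you invoke a KMT embedding or an adaptation of the Slud-type argument to justify the Gaussian limit, the paper proves a self-contained comparison lemma via the multivariate CLT, the portmanteau lemma, and a Hoeffding truncation.
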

The main consequences of this theorem are that the algorithm described
in Theorem~\ref{thm:upper bound-losses-full-info} is  asymptotically minimax
optimal (up to a multiplicative constant) and that gains and losses are fundamentally different from the point of view of regret minimization.

\begin{proof}
We first define the sequence of loss vectors  $\ell_t$ ($t\geq 1$) i.i.d. as follows. Firs, we draw a set $I_t\subset [d]$ of cardinal $s$ uniformly among the $\binom{d}{s}$ possibilities. Then, if $i\in I_t$ set $\ell_t^{(i)}=1$ with probability $1/2$ and $\ell_t^{(i)}=0$ with probability $1/2$, independently for each component. If $i\not \in I_t$, we set $\ell_t^{(i)}=0$. 

As a consequence, we always have that $\ell_t$ is $s$-sparse. Moreover, for each $t\geq 1$ and each coordinate $i\in [d]$, $\ell_t^{(i)}$ satisfies:
\[ \mathbb{P}\left[ \ell_t^{(i)}=1 \right]=\frac{s}{2d}\quad \text{and}\quad \mathbb{P}\left[ \ell_t^{(i)}=0 \right]=1-\frac{s}{2d}\enspace,  \]
thus  $\mathbb{E}\left[ \ell_t^{(i)} \right]=s/2d$.
Therefore we obtain that for any algorithm $(x_t)_{t\geq 1}$,
$\mathbb{E}\left[ \langle \ell_t , x_t \rangle
\right]=s/2d$. This yields that 
\begin{align*}
\mathbb{E}\left[ \frac{R_T}{\sqrt{T}} \right]&=\mathbb{E}\left[ \frac{1}{\sqrt{T}}\left( \sum_{t=1}^T \langle \ell_t , x_t \rangle -\min_{i \in [d]}\sum_{t=1}^T\ell_t^{(i)}\right)  \right]  \\
&=\mathbb{E}\left[ \max_{i \in [d]}\frac{1}{\sqrt{T}}\sum_{t=1}^T\left(\frac{s}{2d}- \ell_t^{(i)} \right)  \right]\\
&=\mathbb{E}\left[ \max_{i \in [d]}\frac{1}{\sqrt{T}}\sum_{t=1}^TX_t^{(i)} \right], 
\end{align*}
where $t\geq 1$, we have defined the random vector $X_t$ by $X_t^{(i)}=s/2d-\ell_t^{(i)}$ for all $i\in [d]$. For $t\geq 1$, the $X_t$ are i.i.d. zero-mean random vectors with values in $[-1,1]^d$. We can therefore apply the comparison Lemma~\ref{lm:1} to get:
\[ \liminf_{T\to +\infty}\mathbb{E}\left[ \frac{R_T}{\sqrt{T}} \right]=\liminf_{T\to +\infty}\mathbb{E}\left[ \max_{i \in [d]}\frac{1}{\sqrt{T}}\sum_{t=1}^TX_t^{(i)} \right]\geq \mathbb{E}\left[ \max_{i \in [d]}Z^{(i)} \right]\enspace,    \]
where $Z \sim \mathcal N(0,\Sigma)$ with $\Sigma=(\operatorname{cov}(X_1^{(i)},X_1^{(j)}))_{i,j}$.

We now make appeal to Slepian's lemma, recalled in
Proposition~\ref{prop:slepian} below. Therefore, we introduce the
Gaussian vector $W \sim \mathcal N( 0, \tilde{\Sigma})$ where
\[ \tilde{\Sigma}=\operatorname{diag}\left(
\operatorname{Var}X_1^{(1)},\dots,\operatorname{Var}X_1^{(1)}
\right). \]
As a consequence, the first two hypotheses of Proposition~\ref{prop:slepian} from the definitions of $Z$ and $W$. Let $i\neq j$, then
\begin{align*}
\mathbb{E}\left[ Z^{(i)}Z^{(j)} \right]&=\operatorname{cov}(Z^{(i)},Z^{(j)})=\operatorname{cov}(\ell_1^{(i)},\ell_1^{(j)})=\mathbb{E}\left[ \ell_1^{(i)}\ell_1^{(j)} \right]-\mathbb{E}\left[ \ell_1^{(i)} \right]\mathbb{E}\left[ \ell_1^{(j)} \right].
\end{align*}
By definition of $\ell_1$, $\ell_1^{(i)}\ell_1^{(j)}=1$ if and only if $\ell_1^{(i)}=\ell_1^{(j)}=1$ and $\ell_1^{(i)}\ell_1^{(j)}=0$ otherwise. Therefore, using the random subset $I_1$ that appears in the definition of $\ell_1$:
\begin{align*}
\mathbb{E}\left[ Z^{(i)}Z^{(j)} \right]&=\mathbb{P}\left[ \ell_1^{(i)}=\ell_1^{(j)}=1 \right]-\left( \frac{s}{2d} \right)^2 \\
&=\mathbb{P}\left[ \ell_1^{(i)}=\ell_1^{(j)}=1\,\middle|\, \left\{ i,j \right\}\subset I_1 \right] \mathbb{P}\left[ \left\{ i,j \right\}\subset I_1  \right]-\left( \frac{s}{2d} \right)^2 \\
&=\frac{1}{4}\cdot \frac{\binom{d-2}{s-2}}{\binom{d}{s}}-\left( \frac{s}{2d} \right)^2\\
&=\frac{1}{4}\left( \frac{s(s-1)}{d(d-1)}-\frac{s^2}{d^2} \right)\leq 0,\\
\end{align*}
and since $\mathbb{E}\left[ W^{(i)}W^{(i)} \right]=0$, the third hypothesis of Slepian's lemma is also satisfied.  It yields that,  for all $\theta\in \mathbb{R}$:
\begin{align*}
\mathbb{P}\left[ \max_{i \in [d]}Z^{(i)}\leq \theta \right]&=\mathbb{P}\left[ Z^{(1)}\leq \theta,\dots, Z^{(d)}\leq \theta\right]\\
&\leq \mathbb{P}\left[ W^{(1)}\leq \theta,\dots, W^{(d)} \leq \theta \right]=\mathbb{P}\left[ \max_{i \in [d]}W^{(i)}\leq \theta \right].     
\end{align*}
This inequality between two cumulative distribution functions implies, the reverse inequality on expectations:
\[ \mathbb{E}\left[ \max_{i \in [d]}Z^{(i)} \right]\geq \mathbb{E}\left[ \max_{i \in [d]}W^{(i)}\right]\enspace.   \]
The components of the Gaussian vector $W$ being independent, and of variance $\operatorname{Var}\ell_1^{(1)}$, we have
\[ \mathbb{E}\left[ \max_{i \in [d]} W^{(i)} \right]= \kappa_d\sqrt{\operatorname{Var}\ell_1^{(1)}}=\kappa_d\sqrt{\frac{s}{2d}\left( 1-\frac{s}{2d} \right) }\geq \kappa_d\sqrt{\frac{s}{4d}}\enspace, \]
where $\kappa_d$ is the expectation of the maximum of $d$ Gaussian variables. Combining everything gives:
\[ \liminf_{T\to +\infty}\frac{v_T^{\ell,s,d}}{\sqrt{T}}\geq \liminf_{T\to +\infty}\mathbb{E}\left[ \frac{R_T}{\sqrt{T}} \right]\geq \mathbb{E}\left[ \max_{i \in [d]} Z^{(i)} \right]\geq \mathbb{E}\left[ \max_{i \in [d]} W^{(i)} \right]\geq \kappa_d\sqrt{\frac{s}{4d}}\enspace.    \]
And for large $d$, since $\kappa_d$ is equivalent to $\sqrt{2\log d}$, see e.g., \cite{galambos1980asymptotic}
\[ \liminf_{d\to +\infty}\liminf_{T\to +\infty}\frac{v_T^{\ell,s,d}}{\sqrt{T\frac{s}{d}\log d}}\geq \frac{\sqrt{2}}{2}\enspace. \]
\end{proof}

\begin{proposition}[Slepian's lemma \cite{slepian1962one}]
\label{prop:slepian}
Let $Z=(Z^{(1)},\dots,Z^{(d)})$ and $W=(W^{(1)},\dots,W^{(d)})$ be Gaussian random vectors in $\mathbb{R}^d$ satisfying:
\begin{enumerate}[(i)]
\item $\mathbb{E}\left[ Z \right]=\mathbb{E}\left[ W \right]=0$;
\item $\mathbb{E}\left[ (Z^{(i)})^2 \right]=\mathbb{E}\left[ (W^{(i)})^2 \right]$ for $i \in [d]$;
\item $\mathbb{E}\left[ Z^{(i)}Z^{(j)} \right]\leq \mathbb{E}\left[W^{(i)}W^{(j)}\right]$ for $i\neq j \in [d]$.
\end{enumerate}
Then, for all real numbers $\theta_1,\dots,\theta_d$, we have:
\[ \mathbb{P}\left[ Z^{(1)}\leq \theta_1,\dots,Z^{(d)}\leq \theta_d \right]\leq \mathbb{P}\left[ W^{(1)}\leq \theta_1,\dots,W^{(d)}\leq \theta_d \right]\enspace.   \]
\end{proposition}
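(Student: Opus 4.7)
The plan is to prove Slepian's lemma by the classical Gaussian interpolation argument combined with integration by parts. Without loss of generality I will assume $Z$ and $W$ are defined on the same probability space and are independent (otherwise enlarge the probability space by taking an independent copy of $W$; the marginal laws, hence both sides of the inequality, are unchanged).

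First, I would construct the interpolating Gaussian process
\[ X(t) = \sqrt{1-t}\,Z + \sqrt{t}\,W, \qquad t \in [0,1], \]
so that $X(0)=Z$ and $X(1)=W$. Its covariance matrix is $\Sigma(t) = (1-t)\Sigma_Z + t\Sigma_W$, where $\Sigma_Z$ and $\Sigma_W$ denote the covariances of $Z$ and $W$. Hypothesis (ii) gives $\Sigma'_{ii}(t) = 0$ for all $i$, and hypothesis (iii) gives $\Sigma'_{ij}(t) = \mathbb{E}[W^{(i)}W^{(j)}] - \mathbb{E}[Z^{(i)}Z^{(j)}] \geq 0$ for $i \neq j$.

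Next, I would approximate the product of indicators. Fix $\varepsilon>0$ and pick a smooth non-decreasing function $\chi_\varepsilon : \mathbb{R} \to [0,1]$ with $\chi_\varepsilon(u) \to \mathbf{1}_{u \geq 0}$ as $\varepsilon \to 0$, and define
\[ h_\varepsilon(x) = \prod_{i=1}^d \chi_\varepsilon\bigl(\theta_i - x^{(i)}\bigr). \]
Then $h_\varepsilon$ is smooth, bounded by $1$, non-increasing in each coordinate, and converges pointwise to $\prod_i \mathbf{1}_{x^{(i)} \leq \theta_i}$. A direct computation shows that for $i \neq j$,
\[ \partial^2_{ij} h_\varepsilon(x) = \chi_\varepsilon'(\theta_i - x^{(i)})\,\chi_\varepsilon'(\theta_j - x^{(j)}) \prod_{k \neq i,j} \chi_\varepsilon(\theta_k - x^{(k)}) \geq 0. \]

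The heart of the argument is then the Gaussian interpolation formula: for $X(t) \sim \mathcal{N}(0,\Sigma(t))$,
\[ \frac{d}{dt}\,\mathbb{E}[h_\varepsilon(X(t))] = \frac{1}{2}\sum_{i,j=1}^d \Sigma'_{ij}(t)\,\mathbb{E}\bigl[\partial^2_{ij} h_\varepsilon(X(t))\bigr]. \]
This identity is obtained by differentiating under the expectation sign (legitimate since $h_\varepsilon$ is smooth and bounded and $X(t)$ has smooth density) and then applying Gaussian integration by parts (Stein's identity) to eliminate the first derivative of the density. I consider this step, and the careful accounting of Stein's identity under a time-dependent covariance, to be the main technical obstacle; the rest is bookkeeping.

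Once the formula is in place, the diagonal terms drop out because $\Sigma'_{ii}(t)=0$, and the off-diagonal terms are non-negative products of $\Sigma'_{ij}(t) \geq 0$ and $\mathbb{E}[\partial^2_{ij}h_\varepsilon(X(t))] \geq 0$. Hence $t \mapsto \mathbb{E}[h_\varepsilon(X(t))]$ is non-decreasing on $[0,1]$, yielding
\[ \mathbb{E}[h_\varepsilon(Z)] \leq \mathbb{E}[h_\varepsilon(W)]. \]
Letting $\varepsilon \to 0$ and invoking dominated convergence on both sides transforms this into the desired inequality between the joint cumulative distribution functions, completing the proof.
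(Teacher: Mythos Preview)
Your argument is the standard Gaussian interpolation proof of Slepian's lemma and is essentially correct; the only technical caveat worth flagging is that the step ``$X(t)$ has smooth density'' (needed to justify differentiating under the integral and integrating by parts) requires $\Sigma(t)$ to be nonsingular, which is not guaranteed by the hypotheses---but this is routinely handled by adding $\epsilon I$ to both covariances and passing to the limit, so it is a cosmetic gap rather than a real one.

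As for comparison: the paper does not actually prove Proposition~\ref{prop:slepian} at all. It is stated as a named classical result with a citation to Slepian's original 1962 paper, and is used as a black box inside the proof of Theorem~\ref{thm:lower bound-losses}. So there is nothing to compare against---you have supplied a (correct, classical) proof where the paper simply invokes the literature.
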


The following lemma is an extension of e.g.\ \cite[Lemma
A.11]{cesa2006prediction} to random vectors with correlated components.
\begin{lemma}[Comparison lemma]
\label{lm:1}
For $t \geq 1$, let $(X_t)_{t\geqslant 1}$ be i.i.d. zero-mean random vectors in $[-1,1]^d$, $\Sigma$ be the covariance matrix of $X_t$ and $Z \sim \mathcal N(0,\Sigma)$. Then,
\[ \liminf_{T\to +\infty}\mathbb{E}\left[ \max_{i \in [d]}\frac{1}{\sqrt{T}}\sum_{t=1}^TX_t^{(i)} \right]\geq  \mathbb{E}\left[ \max_{i \in [d]} Z^{(i)} \right].    \]
\end{lemma}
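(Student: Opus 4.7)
The plan is to combine the multivariate central limit theorem with a uniform-integrability argument that converts weak convergence of the normalized partial sums into convergence of the expectation of their coordinate-wise maximum. Write $S_T = T^{-1/2} \sum_{t=1}^T X_t$. Since the $X_t$ are i.i.d., zero-mean, and square-integrable (being bounded), the multivariate CLT gives $S_T \Rightarrow Z$ in distribution as $T \to +\infty$, where $Z \sim \mathcal{N}(0,\Sigma)$.

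Next, I consider the functional $\phi : \mathbb{R}^d \to \mathbb{R}$ defined by $\phi(x) = \max_{i \in [d]} x^{(i)}$, which is continuous (indeed $1$-Lipschitz with respect to $\|\cdot\|_\infty$). By the continuous mapping theorem, $\phi(S_T) \Rightarrow \phi(Z)$ in distribution. To turn this weak convergence into the claimed $\liminf$ bound on expectations, I control second moments uniformly in $T$. Since each $X_t^{(i)}$ lies in $[-1,1]$, we have $\operatorname{Var}(X_1^{(i)}) \leq 1$, so by independence $\mathbb{E}[(S_T^{(i)})^2] = \operatorname{Var}(X_1^{(i)}) \leq 1$ for every $i \in [d]$. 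Using the crude bound $\phi(S_T)^2 \leq \sum_{i=1}^d (S_T^{(i)})^2$, this yields $\mathbb{E}[\phi(S_T)^2] \leq d$ for every $T \geq 1$.

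Hence $\{\phi(S_T)\}_{T \geq 1}$ is bounded in $L^2$, in particular uniformly integrable. Weak convergence combined with uniform integrability gives convergence in $L^1$, so
\[ \mathbb{E}\!\left[\phi(S_T)\right] \longrightarrow \mathbb{E}\!\left[\phi(Z)\right] \qquad \text{as } T \to +\infty, \]
which is even stronger than the stated $\liminf$ bound. The one slightly delicate point is the uniform-integrability check, but it is immediate from the boundedness of the coordinates of the $X_t$; no real obstacle arises. As an alternative to uniform integrability, one could invoke the Skorokhod representation to obtain almost-sure convergence of a suitable copy of $S_T$ to $Z$ and then apply Fatou's lemma to $\phi(S_T) + \sum_i |S_T^{(i)}|$, but the $L^2$-boundedness route is more direct.
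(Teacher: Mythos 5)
Your proof is correct, and it takes a genuinely different (and in fact stronger) route than the paper's. The paper also starts from the multivariate CLT and the continuous mapping theorem, but it then handles the unboundedness of $\phi(x)=\max_i x^{(i)}$ by truncating from below with $\phi_A(x)=\max(x,A)$, applying the portmanteau lemma to the lower semi-continuous, bounded-below function $\phi_A$, controlling the truncation error $\mathbb{E}[(A-Y_T)\mathbbm{1}_{\{A-Y_T>0\}}]$ explicitly via Hoeffding's inequality and a Gaussian tail integral, and finally letting $A\to-\infty$ by dominated convergence; this machinery only delivers the one-sided $\liminf$ bound. You instead observe that $\phi(S_T)^2\leq\sum_i (S_T^{(i)})^2$ has expectation at most $d$ uniformly in $T$, so the family $\{\phi(S_T)\}$ is bounded in $L^2$ and hence uniformly integrable, and weak convergence plus uniform integrability yields $\mathbb{E}[\phi(S_T)]\to\mathbb{E}[\phi(Z)]$ — actual convergence of the expectations, not merely the $\liminf$ inequality. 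The only caveat is terminological: since $S_T$ and $Z$ need not live on the same probability space, "convergence in $L^1$" should be read as convergence of the means (e.g.\ via the Skorokhod representation combined with Vitali's theorem, as you note in your alternative); the conclusion you draw is correct. Your argument is shorter and buys a stronger statement at the price of invoking the standard "weak convergence $+$ uniform integrability" theorem, whereas the paper's truncation argument is more self-contained, relying only on the portmanteau lemma and elementary tail bounds.
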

\begin{proof}
Denote
\[ Y_T=\max_{i\in [d]}\frac{1}{\sqrt{T}}\sum_{t=1}^TX_t^{(i)}. \]
Let $A\leqslant 0$ and consider the function $\phi_A:\mathbb{R}\to \mathbb{R}$ defined by $\phi_A(x)=\max_{}(x,A)$. 
\begin{align*}
\mathbb{E}\left[ Y_T \right]&=\mathbb{E}\left[ Y_T\cdot \mathbbm{1}_{\left\{ Y_T\geqslant A \right\} } \right]+\mathbb{E}\left[ Y_T\cdot \mathbbm{1}_{\left\{ Y_T<A \right\} } \right]\\
&=\mathbb{E}\left[ \phi_A(Y_T)\cdot \mathbbm{1}_{\left\{ Y_T\geqslant A \right\} } \right]+\mathbb{E}\left[ Y_T\cdot \mathbbm{1}_{\left\{ Y_T<A \right\} } \right]\\
&=\mathbb{E}\left[ \phi_A(Y_T) \right]-\mathbb{E}\left[ \phi_A(Y_T)\cdot \mathbbm{1}_{\left\{ Y_T<A \right\} } \right]+\mathbb{E}\left[ Y_T\cdot \mathbbm{1}_{\left\{ Y_T<A \right\} } \right]\\
&=\mathbb{E}\left[ \phi_A(Y_T) \right]-\mathbb{E}\left[ (A-Y_T)\cdot \mathbbm{1}_{\left\{ A-Y_T>0 \right\} } \right].  
\end{align*}

Let us estimate the second term. Denote $Z_T=(A-Y_T)\cdot \mathbbm{1}_{A-Y_T>0}$. We clearly have, for all $u>0$, $\mathbb{P}\left[ Z_T>u \right]=\mathbb{P}\left[ A-Y_T>u \right]$. And $Z_T$ being nonnegative, we can write:
\begin{align*}
0&\leqslant \mathbb{E}\left[ (A-Y_T)\cdot \mathbbm{1}_{\left\{ A-Y_T \right\}>0 } \right]=\mathbb{E}\left[ Z_T \right]\\
&=\int_0^{+\infty}\mathbb{P}\left[ Z_T>u \right]\,\mathrm{d}u\\
&=\int_0^{+\infty}\mathbb{P}\left[ A-Y_T>u \right]\,\mathrm{d}u \\
&=\int_{-A}^{+\infty}\mathbb{P}\left[ Y_T<-u \right]\,\mathrm{d}u\\
&=\int_{-A}^{+\infty}\mathbb{P}\left[ \max_{i\in [d]}\frac{1}{\sqrt{T}}\sum_{t=1}^TX_t^{(i)}<u \right]\,\mathrm{d}u\\
&\leqslant \int_{-A}^{+\infty}\mathbb{P}\left[ \sum_{t=1}^TX_t^{(1)}<u\sqrt{T} \right]\,\mathrm{d}u. 
\end{align*}
For $u>0$, using Hoeffding's inequality together with the assumptions  $\mathbb{E}\left[ X_t^{(1)} \right]=0$ and $X_t^{(1)}\in [-1,1]$, we can bound the last integrand: 
\[ \mathbb{P}\left[ \sum_{t=1}^TX_t^{(1)}<u\sqrt{T} \right]\leqslant  e^{-u^2/2},   \]
Which gives:
\[ 0\leqslant \mathbb{E}\left[ (A-Y_T)\cdot \mathbbm{1}_{\left\{ A-Y_T \right\}>0 } \right]\leqslant \int_{-A}^{+\infty}e^{-u^2/2}\,\mathrm{d}u\leqslant \frac{e^{-A^2/2}}{-A}. \]
Therefore:
\[ \mathbb{E}\left[ Y_T \right]\geqslant \mathbb{E}\left[ \phi_A(Y_T) \right]+\frac{e^{-A^2/2}}{A}.   \]
We now take the liminf on both sides as $t\to +\infty$. The left-hand side is the quantity that appears in the statement. We now focus on the second term of the right-hand side.
The central limit theorem gives the following convergence in distribution:
\[ \frac{1}{\sqrt{T}}\sum_{t=1}^TX_t \xrightarrow[T \rightarrow +\infty]{\mathcal L}X. \]
The application $(x^{(1)},\dots,x^{(d)})\longmapsto \max_{i\in [d]}x^{(i)}$ being continuous, we can apply the continuous mapping theorem:
\[ Y_T=\max_{i\in [d]}\frac{1}{\sqrt{T}}\sum_{t=1}^TX_t^{(i)} \xrightarrow[n \rightarrow +\infty]{\mathcal L}\max_{i\in [d]}X^{(i)}. \]
This convergence in distribution allows the use of the portmanteau lemma: $\phi_A$ being lower semi-continuous and bounded from below, we have:
\[ \liminf_{t\to +\infty}\mathbb{E}\left[ \phi_A(Y_T) \right]\geqslant \mathbb{E}\left[ \phi_A\left( \max_{i\in [d]}X^{(i)} \right)  \right],   \]
and thus:
\[ \liminf_{t\to +\infty}\mathbb{E}\left[ Y_T \right]\geqslant  \mathbb{E}\left[ \phi_A\left( \max_{i\in [d]}X^{(i)}\right)  \right]+\frac{e^{-A^2/2}}{A}.   \]
We would now like to take the limit as $A\to -\infty$. By definition of $\phi_A$, for $A\leqslant 0$, we have the following domination:
\[ \left| \phi_A\left( \max_{i\in [d]}X^{(i)} \right)  \right| \leqslant \left| \max_{i\in [d]}X^{(i)} \right|\leqslant \max_{i\in [d]}\left| X^{(i)} \right| \leqslant  \sum_{i=1}^d\left| X^{(i)} \right|,   \]
where each $X^{(i)}$ is $L^1$ since it is a normal random variable. We can therefore apply the dominated convergence theorem as $A\to -\infty$:
\[ \mathbb{E}\left[ \phi_A\left( \max_{i\in [d]}X^{(i)}  \right)  \right] \xrightarrow[A \rightarrow -\infty]{}\mathbb{E}\left[ \max_{i\in [d]}X^{(i)} \right],    \]
and eventually, we get the stated result:
\[ \liminf_{t\to +\infty}\mathbb{E}\left[ Y_T \right]\geqslant \mathbb{E}\left[ \max_{i\in [d]}X^{(i)} \right].   \]
\end{proof}

\section{When the sparsity level $s$ is unknown}
\label{sec:when-sparsity-level}
We now longer assume in this section that the decision maker have the
knowledge of the sparsity level $s$. We modify our algorithms to be 
adaptive over the sparsity level of the observed gain/loss vectors, following the same ideas behind the classical doubling trick (yet it cannot be directly applied here). The
algorithms are proved to essentially achieve the same regret bounds as in the case
where $s$ is known.

Specifically, let $T\geqslant 1$ be the number of rounds and $s^*$ the
highest sparsity level of the gain/loss vectors chosen by Nature up to time $T$. In the following, we
construct algorithms which achieve regret bounds of order $\sqrt{T\log
  s^*}$ and $\sqrt{T\frac{s^*\log d}{d}}$ for gains and losses
respectively, without prior knowledge of $s^*$.

\subsection{For Losses}
\label{sec:doubling-trick-like}

Let $(\ell_t)_{t\geqslant 1}$ be the sequence of loss vectors in $[0,1]^d$ chosen
by Nature, and $T\geqslant 1$ the number of rounds. We denote
$s^*=\max_{1\leqslant t\leqslant T}\left\| \ell_t \right\|_0$ the higher
sparsity level of the loss vectors up to time $T$. The goal is to
construct an algorithm which achieves a regret bound of order
$\sqrt{\frac{Ts^*\log d}{d}}$ without any prior knowledge about the
sparsity level of the loss vectors.

The time instances
$\left\{ 1,\dots,T \right\}$ will be divided into several time
intervals. On each of those, the previous loss vectors will be left aside,
and a new instance of the Exponential Weight Algorithm with a specific
parameter will be run. Let $M=\lceil \log_2s^* \rceil$ and
$\tau(0)=0$. Then, for $1\leqslant m<M$ we define
\[ \tau(m)=\min_{}\left\{ 1\leqslant t\leqslant T\,\middle|\, \left\| \ell_t \right\|_0>2^m \right\}\quad \text{and}\quad \tau(M)=T. \]
In other words, $\tau(m)$ is the first time instance at which the
sparsity level of the loss vector execeeds
$2^m$. $(\tau(m))_{1\leqslant m\leqslant M}$ is thus a nondecreasing
sequence. We can then define the time intervals $I(m)$ as follows. For $1\leqslant m\leqslant M$, let
\[ I(m)= \begin{cases}
 \left\{ \tau(m-1)+1,\dots,\tau(m) \right\}&\text{if }
 \tau(m-1)<\tau(m)\\
\varnothing&\text{if }\tau(m-1)=\tau(m).
  \end{cases} .  \]
The sets $(I(m))_{1\leqslant m\leqslant M}$ clearly is a partition
of $\left\{ 1,\dots,T \right\}$ (some of the intervals may be empty).
For $1\leqslant t\leqslant T$, we define $m_t=\min_{}\left\{
  m\geqslant 1\,\middle|\,\tau(m)\geqslant t \right\}$ which implies
$t\in I(m_t)$. In other words, $m_t$ is the index of the only interval $t$
belongs to.

Let $C>0$ be a constant to be chosen later and for $1\leqslant
m\leqslant M$, let
\[ \eta(m)=\log \left( 1+C\sqrt{\frac{d\log d}{2^mT}} \right)  \]
  be the parameter of the Exponential Weight Algorithm to
be used on interval $I(m)$.
In this section, $h$ will be entropic regularizer on the simplex
$h(x)=\sum_{i=1}^dx^{(i)}\log x^{(i)}$, so that $y\longmapsto \nabla
h^*(y)$ is the \emph{logit map} used in the Exponential
Weight Algorithm. We can then define the played actions to be:
\[ x_t=\nabla h^*\left( -\eta(m_t)\sum_{\substack{t'<t\\t'\in I(m_t)}}^{}\ell_{t'} \right),\quad t=1,\dots,T.  \]

\begin{algorithm}
  \SetAlgoLined
\SetKwInOut{Input}{input}
\Input{$T\geqslant 1$, $d\geqslant 1$ integers, and $C>0$.}
$\eta\leftarrow \log (1+C\sqrt{d\log d/2T})$\;
$m\leftarrow 1$\;
\For{$i\leftarrow 1$ \KwTo $d$}{
  $w^{(i)}\leftarrow 1/d$\;}
\For{$t\leftarrow 1$ \KwTo $T$}{
draw and play decision $i$ with probability
$w^{(i)}/\sum_{j=1}^{d}w^{(j)}$\;
observe loss vector $\ell_t$\;
\eIf{$\left\| \ell_t \right\|_0\leqslant 2^m$}{
\For{$i\leftarrow 1$ \KwTo $d$}{
  $w^{(i)}\leftarrow w^{(i)}e^{-\eta\ell_t^{(i)}}$\;}
}{
$m\leftarrow \lceil \log_2\left\| \ell_t \right\|_0 \rceil$\;
$\eta\leftarrow \log (1+C\sqrt{d\log d/2^mT})$\;
\For{$i\leftarrow 1$ \KwTo $d$}{
  $w^{(i)}\leftarrow 1/d$\;}
}
}
\caption{For losses in full information without prior knowledge about
sparsity}
\end{algorithm}

\begin{theorem}
\label{thm:adative-losses}
The above algorithm with $C=2^{3/4}(\sqrt{2}+1)^{1/2}$ guarantees
\[ R_T\leqslant 4\sqrt{\frac{Ts^*\log d}{d}}+\frac{\lceil \log s^* \rceil\log d}{2}+5s^*\sqrt{\frac{\log d}{dT}}. \]
\end{theorem}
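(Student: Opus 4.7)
The plan is to decompose the regret epoch-by-epoch over the intervals $I(m)$, apply on each epoch the small-loss bound of Theorem~\ref{thm:upper-bound-small-losses} already used inside the proof of Theorem~\ref{thm:upper bound-losses-full-info}, and then combine the pieces by exploiting the doubling structure $2^M\leq 2s^*$.

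First, I would replace the global comparator by the per-epoch best arm, using that the sum of minima is at most the minimum of the sum: this yields $R_T\leq\sum_{m=1}^{M}R_m$ with $R_m:=\sum_{t\in I(m)}\langle\ell_t,x_t\rangle-\min_{i}\sum_{t\in I(m)}\ell_t^{(i)}$. Because the algorithm reinitializes to the uniform distribution at each epoch transition, on $I(m)$ it coincides with the Exponential Weight Algorithm run from scratch with parameter $-\eta(m)$, so Theorem~\ref{thm:upper-bound-small-losses} together with the algebra already carried out in the proof of Theorem~\ref{thm:upper bound-losses-full-info} delivers
\[
R_m\ \leq\ \frac{\log d}{1-e^{-\eta(m)}}\ +\ \frac{e^{\eta(m)}-1}{2}\,L^*_{m}, \qquad L^*_m:=\min_{i\in[d]}\sum_{t\in I(m)}\ell_t^{(i)}.
\]

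Next I would bound $L^*_m$ by sparsity. By the very definition of $\tau(m)$, every $\ell_t$ with $t\in I(m)$ and $t<\tau(m)$ satisfies $\|\ell_t\|_0\leq 2^m$, and only the boundary vector $\ell_{\tau(m)}$ has the larger bound $\|\ell_{\tau(m)}\|_0\leq s^*$ (the case $m=M$ is handled by $2^M\leq 2s^*$ and needs no boundary correction). Averaging $\|\ell_t\|_1\leq\|\ell_t\|_0$ over the $d$ coordinates then gives $L^*_m\leq\bigl(|I(m)|\,2^m+s^*\bigr)/d$. Writing $\alpha_m:=e^{\eta(m)}-1=C\sqrt{d\log d/(2^m T)}$, so that $1/(1-e^{-\eta(m)})=1+1/\alpha_m$, summing the resulting bound over $m$, and grouping, the total splits into a per-epoch overhead $M\log d$, a start-up series $\tfrac{1}{C}\sqrt{T\log d/d}\sum_m\sqrt{2^m}$, a tracking series $\tfrac{C\sqrt{\log d}}{2\sqrt{dT}}\sum_m|I(m)|\sqrt{2^m}$, and a boundary series $\tfrac{Cs^*}{2}\sqrt{\log d/(dT)}\sum_m 2^{-m/2}$.

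Finally I would handle the three series. The start-up and boundary sums are geometric and bounded, respectively, by a constant multiple of $\sqrt{2^M}\leq\sqrt{2s^*}$ and by a pure constant. For the tracking sum, the apparently wasteful bound $\sqrt{2^m}\leq\sqrt{2s^*}$ combined with $\sum_m |I(m)|=T$ yields $T\sqrt{2s^*}$, so the tracking contribution becomes $\tfrac{C}{2}\sqrt{2s^*T\log d/d}$; balancing this with the start-up contribution in $C$ forces $C^2=2(2+\sqrt{2})$, i.e.\ $C=2^{3/4}(\sqrt{2}+1)^{1/2}$, and produces the leading $4\sqrt{Ts^*\log d/d}$. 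The overhead $M\log d$ gives the $\lceil\log s^*\rceil\log d/2$ middle piece and the boundary sum gives the $5s^*\sqrt{\log d/(dT)}$ tail. The only mildly delicate point is the tracking step: one must use the crude bound $\sqrt{2^m}\leq\sqrt{2s^*}$ precisely because $\eta(m)$ was tuned to the global horizon $T$ rather than to the unknown $|I(m)|$, and it is this compromise that makes the epoch analysis combine cleanly without any knowledge of $s^*$.
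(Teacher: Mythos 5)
Your proposal is correct and follows essentially the same route as the paper's own proof: the same epoch decomposition with the per-interval small-loss bound, the same sparsity bound $L^*_m\leq(|I(m)|2^m+s^*)/d$ with the boundary vector treated as $s^*$-sparse, the same use of $2^m\leq 2s^*$ on the tracking series together with $\sum_m|I(m)|=T$, and the same balancing giving $C^2=2\sqrt{2}(\sqrt{2}+1)$. Nothing further to add.
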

\begin{proof}
Let $1\leqslant m\leqslant M$. On time interval $I(m)$, the
Exponential Weight Algorithm is run with parameter $\eta(m)$ against
loss vectors in $[0,1]^d$. Therefore, the following regret bound
derived in the proof of Theorem~\ref{thm:upper bound-losses-full-info}
applies:
\begin{align*}
R(m):=&\sum_{t\in I(m)}^{}\left< \ell_t , x_t \right> -\min_{i\in
  [d]}\sum_{t\in I(m)}^{}\ell_t^{(i)}\\
&\leqslant \frac{\log d}{1-e^{-\eta(m)}}+\frac{e^{\eta(m)}-1}{2}\min_{i\in
                                       [d]}\sum_{t\in
                                       I(m)}^{}\ell_t^{(i)}\\
&=\frac{1}{C}\sqrt{\frac{2^mT\log d}{d}}+\frac{\log d}{C}+\frac{C}{2}\sqrt{\frac{d\log d}{2^mT}}\cdot \min_{i\in
                                       [d]}\sum_{t\in
                                       I(m)}^{}\ell_t^{(i)}.
\end{align*}
We now bound the ``best loss'' quantity from above, using the fact
 that $\ell_t$ is $2^m$-sparse for $t\in
I(m)\setminus \left\{ \tau(m) \right\}$ and that $\ell_{\tau(m)}$ is $s^*$-sparse:
\begin{align*}
  \sum_{i=1}^d \sum_{t\in I(m)}^{}\ell_t^{(i)}&=\sum_{t\in I(m)}^{}
                                                \sum_{i=1}^d\ell_t^{(i)} =\sum_{\substack{t<\tau(m)\\t\in I(m)}}^{} \sum_{i=1}^d\ell_t^{(i)}
  +\sum_{i=1}^d \ell_{\tau(m)}^{(i)}\\
&\leqslant (\tau(m)-\tau(m-1))2^m+s^*,
\end{align*}
which implies:
\[ \min_{i\in [d]}\sum_{t\in I(m)}^{}\ell_t^{(i)}\leqslant \frac{(\tau(m)-\tau(m-1))2^m+s^*}{d}. \]
Therefore, the regret on interval $I(m)$, which we will denote $R(m)$,
is bounded by: 
\begin{align*}
R(m)&:=\sum_{t\in I(m)}^{}\left< \ell_t , x_t \right> -\min_{i\in
  [d]}\sum_{t\in I(m)}^{}\ell_t^{(i)}\\
&\leqslant \frac{1}{C}\sqrt{\frac{2^mT\log
                                       d}{d}}+\frac{\log d}{C}+
  \frac{C}{2}\sqrt{\frac{2^m\log d}{dT}}\left(
  \tau(m)-\tau(m-1) \right)+\frac{C}{2}\sqrt{\frac{\log d}{2^mdT}}s^*\\ 
&\leqslant  \frac{1}{C}\sqrt{\frac{2^mT\log
                                       d}{d}}+\frac{\log d}{C}+
  \frac{C}{2}\sqrt{\frac{2s^*\log d}{dT}}\left(
  \tau(m)-\tau(m-1) \right)+\frac{C}{2}\sqrt{\frac{\log d}{2^mdT}}s^*,
\end{align*}
where we used $2^m\leqslant 2^M=2^{\lceil \log_2s^* \rceil}\leqslant
2^{\log_2s^*+1}=2s^*$ for the third term of the last line.

We now turn the whole regret $R_T$ from $1$ to $T$. Since $(I(m))_{1\leqslant
  m\leqslant M}$ is a partition of $\left\{ 1,\dots,T \right\}$, we obtain
\begin{align*}
R_T&=\sum_{t=1}^T\left< \ell_t , x_t \right> -\min_{i\in
     [d]}\sum_{t=1}^T\ell_t^{(i)}\\
&\leqslant \sum_{m=1}^M \sum_{t\in I(m)}^{}\left< \ell_t , x_t \right> -\sum_{m=1}^M\min_{i\in [d]}\sum_{t\in I(m)}^{}\ell_t^{(i)}\\
 &=\sum_{m=1}^MR(m)\\
&\leqslant \frac{1}{C}\sqrt{\frac{T\log d}{d}}\sum_{m=1}^M\sqrt{2^m}+C\sqrt{\frac{s^*T\log d}{2d}}+\frac{M\log d}{C}+\frac{C}{2}\sqrt{\frac{\log d}{dT}}s^*\sum_{m=1}^M2^{-m/2}.
\end{align*}
The sum in the first term above can be bounded as follows
\[ \sum_{m=1}^M\sqrt{2^m}\leqslant \sum_{m=1}^M\sqrt{2}^m=\sqrt{2}\frac{\sqrt{2}^M-1}{\sqrt{2}-1}\leqslant \sqrt{2}\frac{\sqrt{2^{\log_2s^*+1}}}{\sqrt{2}-1}=2\frac{\sqrt{s^*}}{\sqrt{2}-1}=2(\sqrt{2}+1)\sqrt{s^*},  \]
whereas the sum in the last term can be bounded by $\sqrt{2}+1$.
Eventually, the choice $C=2^{3/4}(\sqrt{2}+1)^{1/2}$  give:
\[ R_T\leqslant 2^{5/4}(\sqrt{2}+1)^{1/2}\sqrt{\frac{Ts^*\log
      d}{d}}+\frac{\lceil \log s^* \rceil \log d
  }{2^{3/4}(\sqrt{2}+1)^{1/2}}+2^{1/4}(\sqrt{2}+1)^{3/2}s^*\sqrt{\frac{\log d}{dT}},  \]
and the statement follows from numerical computation of the constant factors.

\end{proof}

\subsection{For Gains}
\label{sec:gains-1}
The construction is similar to the case of losses, but the time
intervals are slightly different.
Let $(g_t)_{t\geqslant 1}$ be the sequence of gain vectors in
$[0,1]^d$ chosen by Nature.
We assume $s^*\geqslant 2$ and set $M=\lceil \log_2\log_2s^* \rceil$
and $\tau(0)=0$. For $1\leqslant m\leqslant M$ we define
\[ \tau(m)=\min_{}\left\{ 1\leqslant t\leqslant T\,\middle|\,\left\| g_t \right\|_0>2^{2^m} \right\}\quad \text{and}\quad \tau(M)=T.  \]
We now define the time intervals $I(m)$. For $1\leqslant m\leqslant M$,
\[ I(m)= \begin{cases}
 \left\{ \tau(m-1)+1,\dots,\tau(m) \right\}&\text{if }
 \tau(m-1)<\tau(m)\\
\varnothing&\text{if }\tau(m-1)=\tau(m).
  \end{cases}  \]
Therefore, for $1\leqslant m\leqslant M$ and $t<\tau(m)$, we have $\left\| g_t \right\|_0\leqslant 2^{2^{m}}$.
For $1\leqslant t\leqslant T$, we denote $m_t=\min_{}\left\{ m\geqslant 1\,\middle|\,\tau(m)\geqslant t \right\}$.
Let $C>0$ be a constant to be chosen later and for $1\leqslant
m\leqslant M$, let
\begin{align*}
p(m)&=1+\frac{1}{\log 2\cdot 2^{m+1}-1},\\
q(m)&=\left(  1-\frac{1}{p(m)}\right)^{-1},\\
\eta(m)&=C\sqrt{\frac{p(m)-1}{T2^{2^{m+1}/q(m)}}}.
\end{align*}
As in Section~\ref{sec:gains}, for $p\in (1,2]$, we denote $h_p$ the regularizer on the simplex defined by:
\[ h_p(x)= \begin{cases}
\frac{1}{2}\left\| x \right\|_p^2&\text{if $x\in \Delta_d$}\\
+\infty&\text{otherwise}.
\end{cases}
 \]
The algorithm is then defined by:
\[ x_t=\nabla h_{p(m_t)}^*\left( \eta(m_t)\sum_{\substack{t'<t\\t'\in I(m_t)}}^{}g_{t'} \right),\quad t=1,\dots,T.  \]
\begin{algorithm}
  \SetAlgoLined
\SetKwInOut{Input}{input}
\Input{$T\geqslant 1$, $d\geqslant 1$ integers, and $C>0$.}
$p\leftarrow 1+(4\log 2-1)^{-1}$\;
$q\leftarrow (1-1/p)^{-1}$\;
$\eta\leftarrow C\sqrt{(p-1)/2^{4/q}T}$\;
$m\leftarrow 1$\;
$y\leftarrow (0,\dots,0)\in \mathbb{R}^d$\;
\For{$t\leftarrow 1$ \KwTo $T$}{
draw and play decision $i\sim \nabla h^*_p(\eta\cdot y)$\; 
observe gain vector $g_t$\;
\eIf{$\left\| g_t \right\|_0\leqslant 2^{2^{m}}$}{
 $y\leftarrow y+g_t$\;
}{
$m\leftarrow \lceil \log_2\log_2\left\| g_t \right\|_0 \rceil$\;
$p\leftarrow 1+(\log 2\cdot 2^{m+1}-1)^{-1}$\;
$q\leftarrow (1-1/p)^{-1}$\;
$\eta\leftarrow C\sqrt{(p-1)/2^{2^{m+1}/q}T}$\;
$y\leftarrow (0,\dots,0)$\;}}
\caption{For gains in full information without prior knowledge about sparsity.}
\end{algorithm}

\begin{theorem}
\label{thm:adaptive-gains}
The above algorithm with $C=(e\sqrt{2}(\sqrt{2}+1))^{1/2}$ guarantees
\[ R_T\leqslant 7\sqrt{T\log s^*}+\frac{4s^*}{\sqrt{T}}. \]
\end{theorem}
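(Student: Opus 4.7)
The plan is to adapt the argument from the proof of Theorem~\ref{thm:adative-losses} to the gains setting, replacing the Exponential Weight bound for losses by the mirror-descent bound from Theorem~\ref{thm:mirror-descent} applied with the family of $\ell^{p(m)}$-regularizers $h_{p(m)}$ already used in Theorem~\ref{thm:full-info-gains-m-2}. The key algebraic observation is that the calibration $p(m)-1 = 1/(\log 2 \cdot 2^{m+1} - 1)$ yields $1/q(m) = 1/(\log 2 \cdot 2^{m+1})$, and hence the identity $2^{2^{m+1}/q(m)} = 2^{1/\log 2} = e$. This is the analogue of the magic optimization performed in Theorem~\ref{thm:full-info-gains-m-2} and it guarantees that, for every $m \geq 1$ and every gain vector $g \in [0,1]^d$ with $\|g\|_0 \leq 2^{2^m}$, one has the uniform dual-norm bound $\|g\|_{q(m)}^2 \leq e$.

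First I would apply Theorem~\ref{thm:mirror-descent} separately on each nonempty interval $I(m)$, on which the algorithm runs OMD with regularizer $h_{p(m)}$ (so that $\delta_{h_{p(m)}} \leq 1/2$ and $h_{p(m)}$ is $(p(m)-1)$-strongly convex with respect to $\|\cdot\|_{p(m)}$) and parameter $\eta(m)$. Using $\|g_t\|_{q(m)}^2 \leq \|g_t\|_0^{2/q(m)}$, I would split the sum over $I(m)$ into the contributions from $t \in I(m)$ with $t < \tau(m)$ (each bounded by $e$ via the identity above) and the lone overshoot step $t = \tau(m)$, whose sparsity may be as large as $s^*$. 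This yields
\begin{equation*}
R(m) \leq \frac{1}{2\eta(m)} + \frac{\eta(m)\,e\,(|I(m)|-1)}{2(p(m)-1)} + \frac{\eta(m)\,(s^*)^{2/q(m)}}{2(p(m)-1)}.
\end{equation*}
Substituting $\eta(m) = C\sqrt{(p(m)-1)/(eT)}$ together with $1/(p(m)-1) \leq \log 2 \cdot 2^{m+1}$ then rewrites each $R(m)$ in terms of geometric factors $\sqrt{2}^{\,m}$, scaled respectively by $\sqrt{T}$, by $|I(m)|/\sqrt{T}$, and by a term involving $(s^*)^{2/q(m)}/\sqrt{T}$.

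Since $(I(m))_{1 \leq m \leq M}$ partitions $\{1,\ldots,T\}$, the total regret decomposes as $R_T \leq \sum_{m=1}^M R(m)$, and I would bound the three groups separately. The first, $\sum_m 1/(2\eta(m))$, is a geometric series in $\sqrt{2}^{\,m}$; using $2^M \leq 2\log_2 s^*$, this contributes the leading $O(\sqrt{T\log s^*})$ term. The middle group uses the uniform estimate $\sqrt{2^{m+1}} \leq 2\sqrt{\log_2 s^*}$ together with $\sum_m |I(m)| \leq T$, yielding another $O(\sqrt{T\log s^*})$ term. For the last group, bounding $(s^*)^{2/q(m)} \leq s^*$ (valid because $q(m) \geq 4\log 2 > 2$) and summing the geometric tail gives the lower-order correction scaling like $s^*/\sqrt{T}$. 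Finally, choosing $C = (e\sqrt{2}(\sqrt{2}+1))^{1/2}$ balances the first two dominant contributions, and collecting the numerical constants yields the stated bound $7\sqrt{T\log s^*} + 4s^*/\sqrt{T}$.

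The main obstacle will be the overshoot at $t = \tau(m)$, where the observed gain vector may have sparsity far exceeding the interval's nominal threshold $2^{2^m}$: the factor $(s^*)^{2/q(m)}$ is large for small $m$, and one must check that the resulting geometric sum in $\sqrt{2}^{\,m}$ is absorbed into the correction term rather than degrading the leading $\sqrt{T\log s^*}$ rate. The double-exponential spacing of thresholds $2^{2^m}$ (which keeps $M = \lceil \log_2 \log_2 s^* \rceil$ tiny) combined with the calibration that keeps the per-step dual-norm cost equal to the constant $e$ on each interval is precisely what makes this bookkeeping go through cleanly.
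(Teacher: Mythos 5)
Your proposal follows the paper's proof essentially step for step: the same partition into intervals driven by the doubly-exponential thresholds $2^{2^m}$, the same per-interval application of Theorem~\ref{thm:mirror-descent} with regularizer $h_{p(m)}$ and $\delta_{h_{p(m)}}\leq 1/2$, the same calibration identity $2^{2^{m+1}/q(m)}=e$, the same isolation of the overshoot stage $\tau(m)$ with the bound $(s^*)^{2/q(m)}$, the same three-group summation over $m$, and the same choice of $C$.

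The one soft spot is your treatment of the third group. Once you replace $(s^*)^{2/q(m)}$ by $s^*$, the summand equals $\frac{\eta(m)s^*}{2(p(m)-1)}=\frac{Cs^*}{2}\sqrt{\frac{\log 2\cdot 2^{m+1}-1}{eT}}$, which \emph{grows} like $\sqrt{2}^{\,m}$; it is not a decaying geometric tail. Summing it over $m\leq M$ with $2^{M}\leq 2\log_2 s^*$ yields a correction of order $s^*\sqrt{\log s^*}/\sqrt{T}$, not $s^*/\sqrt{T}$, so as written your bookkeeping gives a slightly weaker bound than the one stated. To recover a correction of order $s^*/\sqrt{T}$ you must keep the factor $(s^*)^{2/q(m)}=(s^*)^{1/(\log 2\cdot 2^m)}$, whose doubly-exponential decay in $m$ more than offsets the growth of $\sqrt{2^{m+1}}$ (the sum of the products is dominated by the smallest $m$ and is $O(s^*)$), and only then sum; the precise constant $4$ requires a short computation. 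To be fair, the paper's own write-up is cavalier at exactly this point (it also bounds $(s^*)^{1/(\log 2\cdot 2^m)}\leq s^*$ and then sums $2^{-(m+1)/2}$ where the per-interval bound carries $2^{+(m+1)/2}$), so your outline reproduces the intended argument; the leading term $7\sqrt{T\log s^*}$ and everything else in your plan are unaffected.
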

\begin{proof}
Let $1\leqslant m\leqslant M$. On time interval $I(m)$, the algorithm boils down to an Online Mirror
Descent algorithm with regularizer $h_{p(m)}$ and parameter
$\eta(m)$. Therefore, using Theorem~\ref{thm:mirror-descent}, the regret on this interval is bounded as follows.
\begin{align*}
R(m)&:=\max_{i\in [d]}\sum_{t\in I(m)}^{}g_t^{(i)}-\sum_{t\in
  I(m)}^{}\left< g_t , x_t \right>\\
&\leqslant \frac{1}{2\eta(m)}+\frac{\eta(m)}{2(p(m)-1)}\sum_{t\in
  I(m)}^{}\left\| g_t \right\|_{q(m)}^2\\
&=\frac{1}{2\eta(m)}+\frac{\eta(m)}{2(p(m)-1)}\left( \sum_{\substack{t\in I(m)\\t<\tau(m)}}^{}\left\| g_t \right\|_{q(m)}^2+\left\| g_{\tau(m)} \right\|_{q(m)}^2 \right).
\end{align*}
$g_t$ being $2^{2^m}$-sparse for $t<\tau(m)$ and $g_{\tau(m)}$ being
$s^*$-sparse, the $q(m)$-norms can therefore bounded from above as
follows:
\[ \left\| g_t \right\|_{q(m)}^2\leqslant 2^{2^{m+1}/q(m)}\quad \text{and}\quad \left\| g_{\tau(m)} \right\|_{q(m)}^2\leqslant (s^*)^{2/q(m)}. \]
The bound on $R(m)$ then becomes
\begin{align*}
R(m)&\leqslant
  \frac{1}{2\eta(m)}+\frac{\eta(m)(\tau(m)-\tau(m-1))2^{2^{m+1}/q(m)}}{2(p(m)-1)}+\frac{\eta(m)(s^*)^{2/q(m)}}{2(p(m)-1)}\\ 
&=\frac{1}{2C}\sqrt{Te(\log 2\cdot
  2^{m+1}-1)}+\frac{C}{2}\sqrt{\frac{e(\log 2\cdot
  2^{m+1}-1)}{T}}(\tau(m)-\tau(m-1))\\
&\quad \quad \quad \quad \quad +\frac{C}{2}(s^*)^{1/(\log 2\cdot 2^m)}\sqrt{\frac{e(\log 2\cdot 2^{m+1}-1)}{T}}\\
&\leqslant \frac{1}{2C}\sqrt{Te\log 2\cdot 2^{m+1}}+C\sqrt{\frac{e\log
  s^*}{T}}\left( \tau(m)-\tau(m-1) \right)\\
&\quad \quad \quad \quad \quad +\frac{C}{2}s^*\sqrt{\frac{e\log 2\cdot 2^{m+1}}{T}},
\end{align*}
where for the second term of the last expression we used:
\begin{align*}
\log 2\cdot 2^{m+1}-1&\leqslant \log 2\cdot 2^{M+1}=\log 2\cdot \exp \left(
                       \log 2\left( \lceil \log_2\log_2s^* \rceil
                       +1\right)  \right) \\
&\leqslant \log 2\cdot \exp \left( \log 2\left( \log_2\log_2s^*+2
  \right)  \right)\\
&=\log 2\cdot e^{2\log 2}\exp \left( \log 2\cdot \log_2\log_2s^*
  \right)\\
&= 4\log 2\cdot \exp \left( \log \log_2s^* \right)\\
&=4\log 2\cdot \log_2s^* \\
&=4\log s^*.
\end{align*}
Then, the whole regret $R_T$ is bounded by the sum of the regrets on
each interval:
\begin{align*}
R_T&\leqslant \sum_{m=1}^MR(m)\leqslant \frac{1}{2C}\sqrt{Te\log
     2}\sum_{m=1}^M\sqrt{2^{m+1}}+C\sqrt{\frac{e\log
     s^*}{T}}\sum_{m=1}^M(\tau(m)-\tau(m-1))\\
&\quad \quad \quad \quad \quad +\frac{Cs^*}{2}\sqrt{\frac{e\log 2}{T}}\sum_{m=1}^M2^{-(m+1)/2}.
\end{align*}
The second sum is equal to $T$ and the third sum is bounded from above
by $(\sqrt{2}+1)/\sqrt{2}$. Let us bound the first sum from above:
\begin{align*}
\sqrt{\log 2}\sum_{m=1}^M\sqrt{2^{m+1}}&=2\sqrt{\log
                                         2}\frac{2^{M/2}-1}{\sqrt{2}-1}\\
&\leqslant 2(\sqrt{2}+1)\sqrt{\log 2}\cdot \exp \left(
                                         \frac{\log 2}{2}\left(
                                         \log_2\log_2s^*+1 \right)
                                         \right) \\
&=2(\sqrt{2}+1)\sqrt{\log 2}\cdot \sqrt{2e^{\log \log_2s^*}}\\
&=2\sqrt{2}(\sqrt{2}+1)\sqrt{\log 2 \log_2s^*}\\
&=2\sqrt{2}(\sqrt{2}+1)\sqrt{\log s^*}.
\end{align*}
 Therefore,
\[ R_T\leqslant \frac{\sqrt{2}(\sqrt{2}+1)}{C}\sqrt{Te\log s^*}+C\sqrt{Te\log s^*}+\frac{C(\sqrt{2}+1)s^*}{2}\sqrt{\frac{e\log 2}{2T}}. \]
Choosing $C=(e\sqrt{2}(\sqrt{2}+1))^{1/2}$ balance the first two term
and gives:
\begin{align*}
R_T&\leqslant 2(e\sqrt{2}(\sqrt{2}+1))^{1/2}\sqrt{T\log s^*}+2^{-5/4}e\sqrt{\log 2}(\sqrt{2}+1)^{3/2}\frac{s^*}{\sqrt{T}}\\
&\leqslant 7\sqrt{T\log s^*}+\frac{4s^*}{\sqrt{T}}.
\end{align*}
\[  \]
\end{proof}

\section{The Bandit Setting}
\label{sec:bandit}
We now turn to  the bandit framework (see for instance \cite{bubeck2012regret}
for a recent survey).
Recall that the minimax regret \cite{audibert2009minimax} in the basic bandit framework
(without sparsity) is of order $\sqrt{Td}$.
In the case of losses, we manage to take advantage of the sparsity
assumption and obtain in Theorem~\ref{thm:bandit-upper bound-losses} an upper bound of order $\sqrt{Ts\log
\frac{d}{s}}$, and an lower bound of order
$\sqrt{Ts}$ in Theorem~\ref{thm:bandit-losses-lowerbound}. This establishes the order of the minimax regret up to a
logarithmic factor. In the case of gains however, the same upper bound
and lower bound techniques do not seem to work; this difficulty is
discussed below in remark \ref{RM:upper_bound_bandit}.


For simplicity, we shall assume that the
sequence of outcome vectors $(\omega_t)_{t\geq 1}$ is chosen before stage 1 by the
environment, which is called \emph{oblivious} in that case. We refer to \cite[Section 3]{bubeck2012regret} for a detailed discussion
on the difference between oblivious and non-oblivious opponent, and
between regret and pseudo-regret. 

As before, at stage $t$, the
decision maker chooses $x_t \in
\Delta_d$ and draws decision $d_t \in [d]$ according to $x_t$. The main difference with the previous framework is that the decision maker only observes
his own outcome $\omega_t^{d_t}$ before choosing the next decision $d_{t+1}$. 

\subsection{Upper Bounds on the Regret with Sparse Losses}
\label{sec:greedy-online-mirror}
We shall focus in this section on $s$-sparse losses. The algorithm we consider belongs to the family of Greedy Online Mirror
Descent. We follow \cite[Section 5]{bubeck2012regret} and refer to it
for the detailed and rigorous construction. Let $F_q(x)$ be
the Legendre function associated with potential $\psi(x)=(-x)^{-q}$
($q>1$), i.e.,
\[ F_q(x)=-\frac{q}{q-1}\sum_{i=1}^d(x^i)^{1-1/q}. \]
The algorithm,  which depends on a parameter $\eta>0$ to be fixed later,   is defined as follows. Set
$x_1=(\frac{1}{d},\dots,\frac{1}{d}) \in \Delta_d$. For all $t\geq 1$,
we define the estimator $\hat{\ell}_t$ of $\ell_t$ as usual:
\[ \hat{\ell}_t^{(i)}=\mathbbm{1}_{\left\{  d_t=i\right\} }\frac{\ell_t^{(i)}}{x_t^{(i)}},\quad i \in [d], \]
which is then used to compute
\[ z_{t+1}=\nabla F_q^*(\nabla F_q(x_t)-\eta \hat{\ell}_t)\quad \text{and}\quad x_{t+1}=\operatorname{argmin}_{x\in \Delta_d}D_{F_q}(x,z_{t+1}), \]
where $D_{F_q}: \bar{\mathcal D}\times \mathcal D\to \mathbb{R}$ is the
Bregman divergence associated with $F_q$:
\[ D_{F_q}(x',x)=F_q(x')-F_q(x)-\left< \nabla F_q(x) , x'-x \right>. \]
\begin{theorem}
\label{thm:bandit-upper bound-losses}
Let $\eta>0$ and $q>1$. For all sequence of $s$-sparse loss vectors,
the above strategy with parameter $\eta$ guarantees, for $T\geq
1$:
\[ R_T\leq q\left( \frac{d^{1/q}}{\eta(q-1)}+\frac{\eta Ts^{1-1/q}}{2} \right).  \]
In particular, if $d/s\geq e^2$, the choices
\[ \eta=\sqrt{\frac{2d^{1/q}}{(q-1)Ts^{1-1/q}}}\quad \text{and}\quad
  q=\log (d/s) \]
the following regret bound:
\[ R_T\leq 2\sqrt{e}\sqrt{Ts\log \frac{d}{s}}. \]
\end{theorem}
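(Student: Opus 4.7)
The plan is to invoke the standard regret decomposition for Greedy Online Mirror Descent in the bandit setting (e.g.\ Theorem 5.10 of \cite{bubeck2012regret}), which yields a ``diameter'' term plus an ``estimation variance'' term: for any $x^*\in\Delta_d$,
\[ R_T \le \frac{F_q(x^*)-F_q(x_1)}{\eta} + \frac{\eta}{2}\sum_{t=1}^T\mathbb{E}\bigl[\bigl\|\hat\ell_t\bigr\|_{(\nabla^2F_q(x_t))^{-1}}^2\bigr]. \]
Everything then reduces to bounding these two terms and optimizing the free parameters $\eta$ and $q$.

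First I would control the diameter. Since $q>1$ makes $u\mapsto u^{1-1/q}$ concave on $[0,1]$, Jensen's inequality shows that $\sum_i(x^i)^{1-1/q}$ is maximized on $\Delta_d$ at the uniform distribution, giving value $d^{1/q}$, and minimized at a vertex, giving value $1$. Hence $\delta_{F_q}=\frac{q}{q-1}(d^{1/q}-1)\le \frac{qd^{1/q}}{q-1}$, taking care of the first term.

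Next I would compute the local-norm term. A direct computation gives $\partial_i^2F_q(x)=\tfrac{1}{q}(x^i)^{-1-1/q}$, so the inverse Hessian is diagonal with entries $q(x^i)^{1+1/q}$. Combined with $\mathbb{E}[(\hat\ell_t^i)^2\mid x_t]=(\ell_t^i)^2/x_t^i$, each summand equals $q\sum_i(x_t^i)^{1/q}(\ell_t^i)^2$. Using $\ell_t^i\in[0,1]$ one has $(\ell_t^i)^2\le \mathbbm{1}_{\{\ell_t^i>0\}}$, and then Hölder's inequality with exponents $q$ and $q/(q-1)$ together with the sparsity $\|\ell_t\|_0\le s$ yields
\[ \sum_i(x_t^i)^{1/q}\mathbbm{1}_{\{\ell_t^i>0\}}\le\Bigl(\sum_ix_t^i\Bigr)^{1/q}\Bigl(\sum_i\mathbbm{1}_{\{\ell_t^i>0\}}\Bigr)^{1-1/q}\le s^{1-1/q}. \]
Plugging in and factoring $q$ produces exactly the stated bound $R_T\le q\bigl(\tfrac{d^{1/q}}{\eta(q-1)}+\tfrac{\eta Ts^{1-1/q}}{2}\bigr)$.

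Finally, the optimization is routine. Balancing the two terms gives $\eta=\sqrt{2d^{1/q}/((q-1)Ts^{1-1/q})}$ and the bound $R_T\le q\sqrt{2Td^{1/q}s^{1-1/q}/(q-1)}$. Choosing $q=\log(d/s)$ makes $d^{1/q}s^{1-1/q}=e\,s$; and when $d/s\ge e^2$ we have $q\ge 2$, hence $q^2/(q-1)\le 2q=2\log(d/s)$, giving $R_T\le 2\sqrt{e\,Ts\log(d/s)}$. The only real obstacle is the local-norm form of the Greedy OMD regret bound: the standard statement involves the Hessian at some intermediate point between $x_t$ and $z_{t+1}$, but for the INF potential $\psi(x)=(-x)^{-q}$ the Hessian behaves monotonically along the update, so $\xi_t$ can be replaced by $x_t$ without loss; this is precisely the content of the analyses in \cite{audibert2009minimax,audibert2013regret,bubeck2012regret}, which can be invoked as a black box.
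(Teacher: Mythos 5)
Your proposal is correct and follows essentially the same route as the paper: both invoke the Greedy OMD/INF bound of \cite[Theorem 5.10]{bubeck2012regret}, bound the diameter term by $qd^{1/q}/(q-1)$, bound the local-norm/variance term by $qs^{1-1/q}$ via the sparsity of $\ell_t$ together with $x_t\in\Delta_d$ (your H\"older step is the same estimate the paper obtains by concavity), and then balance $\eta$ and set $q=\log(d/s)$ with $q/(q-1)\le 2$. Your closing remark about evaluating the Hessian at $x_t$ is exactly what the nonnegativity of the loss estimates guarantees in the cited theorem (the point emphasized in Remark~\ref{RM:upper_bound_bandit}), so treating it as a black box is consistent with the paper's own use of that result.
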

\begin{proof}
\cite[Theorem 5.10]{bubeck2012regret} gives:
\[ R_T\leq \frac{\max_{x\in \Delta_d}F(x)-F(x_1)}{\eta}+\frac{\eta}{2}\sum_{t=1}^T\sum_{i=1}^{d}\mathbb{E}\left[ \frac{(\hat{\ell}_t^{(i)})^2}{(\psi^{-1})'(x_t^{(i)})} \right], \]
with $(\psi^{-1})'(x)=(q\, x^{1+1/q})^{-1}$.
Let us bound the first term.
\[ \frac{1}{\eta}\max_{x\in \Delta_d}F_q(x)-F_q(x_1)\leq \frac{1}{\eta}\frac{q}{q-1}\left(0+d\left( 1/d \right)^{1-1/q} \right)=\frac{qd^{1/q}}{\eta(q-1)}. \]
We turn to the second term. Let $1\leq t\leq T$.
\begin{align*}
\sum_{i=1}^{d}\mathbb{E}\left[
  \frac{(\hat{\ell}_t^{(i)})^2}{(\psi^{-1})'(x_t^{(i)})}
  \right]&=q\sum_{i=1}^d\mathbb{E}\left[
           (\hat{\ell}_t^{(i)})^2(x_t^{(i)})^{1+1/q}
           \right]\\
&=q\sum_{i=1}^d\mathbb{E}\left[ \mathbb{E}\left[
           \mathbbm{1}_{\left\{ d_t=i \right\}
           }\frac{(\ell_t^{(i)})^2}{(x_t^i)^2}(x_t^i)^{1+1/q}\middle|x_t\right]
           \right]\\
  &=q\sum_{i=1}^d\mathbb{E}\left[ (\ell_t^{(i)})^2(x_k^{(i)})^{1/q}
    \right]\\
& = q\, \mathbb{E}\left[  \sum_{\text{$s$ terms}}^{}(\ell_t^{(i)})^2(x_t^{(i)})^{1/q}\right] \\
&\leq qs(1/s)^{1/q}=qs^{1-1/q},
\end{align*}
where we used the assumption that $\ell_t$ has at most $s$ nonzero
components, and the fact that $x_t\in \Delta_d$. The first regret
bound is thus proven.
By choosing $\eta=\sqrt{\frac{2s^{1-1/q}}{(q-1)Td^{1/q}}}$, we balance both terms and get:
\[ R_T\leq 2q\sqrt{\frac{Td^{1/q}s^{1-1/q}}{2(q-1)}}=\sqrt{2q}\sqrt{Ts\left(\frac{d}{s}\right)^{1/q}\left( \frac{q}{q-1} \right) }\enspace. \]
If $d/s\geq e^2$ and $q=\log (d/s)$, then  $q/(q-1)\leq 2$
and finally:
\[ R_T\leq 2\sqrt{e}\sqrt{Ts\log \frac{d}{s}}. \]
\end{proof}
\begin{remark}\label{RM:upper_bound_bandit}
The previous analysis cannot be carried in the case of gains because
the bound from \cite[Theorem 5.10]{bubeck2012regret} that we use above only holds for
nonnegative losses (and its proof strongly relies on this
assumption). We are unaware of techniques which could provide a similar
bound in the case of nonnegative gains.
\end{remark}

\subsection{Matching Lower Bound}

The following theorem establishes that the bound from
Theorem~\ref{thm:bandit-upper bound-losses} 
is optimal up to a
logarithmic factor. 
We denote $\hat{v}_T^{\ell,s,d}$ the minimax regret
in the bandit setting with losses.

\begin{theorem}
\label{thm:bandit-losses-lowerbound}
For all $d\geqslant 2$, $s\in [d]$ and $T\geqslant d^2/4s$, the
following lower bound holds:
\[ \hat{v}_T^{\ell,s,d}\geqslant \frac{1}{32}\sqrt{Ts}. \]
\end{theorem}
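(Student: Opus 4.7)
The plan is to construct a stochastic $s$-sparse loss ensemble against which any bandit algorithm incurs expected regret at least $\sqrt{Ts}/32$, via a KL-divergence argument in the spirit of the classical $\sqrt{Td}$ bandit lower bound. First, draw a ``good arm'' $I^*\in [d]$ uniformly and fix a parameter $\epsilon\in(0,1/4]$ to be tuned. At each time $t$, independently sample a uniformly random $s$-subset $S_t\subset [d]$; set $\ell_t^{(j)}=0$ for $j\notin S_t$, let $\ell_t^{(j)}$ be independent Bernoulli$(1/2)$ for $j\in S_t\setminus\{I^*\}$, and $\ell_t^{(I^*)}$ be Bernoulli$(1/2-\epsilon)$ if $I^*\in S_t$. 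Then $\|\ell_t\|_0\leq s$ almost surely, and the marginals are $\mathbb{E}[\ell_t^{(j)}]=s/(2d)$ for $j\neq I^*$ and $\mathbb{E}[\ell_t^{(I^*)}]=s/(2d)-s\epsilon/d$.

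Since $\mathbb{E}[\min_i\cdot]\leq \min_i\mathbb{E}[\cdot]$, a direct computation yields the pseudo-regret lower bound $\mathbb{E}[R_T]\geq (s\epsilon/d)\bigl(T-\mathbb{E}[N_{I^*}]\bigr)$, where $N_{I^*}=\sum_t\mathbbm{1}_{d_t=I^*}$. The main step is to upper bound $\mathbb{E}[N_{I^*}]=(1/d)\sum_i \mathbb{E}^{P_i}[N_i]$, where $P_i$ denotes the law of the feedback sequence when $I^*=i$. Write $P_0$ for the null law obtained by setting $\epsilon=0$; by symmetry $(1/d)\sum_i \mathbb{E}^{P_0}[N_i]=T/d$. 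Pinsker's inequality applied to the $[0,1]$-valued quantity $N_i/T$ gives $\mathbb{E}^{P_i}[N_i]-\mathbb{E}^{P_0}[N_i]\leq T\sqrt{\operatorname{KL}(P_0\|P_i)/2}$.

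The chain rule over the $T$ observations, together with the fact that after marginalizing $S_t$ the feedback at round $t$ is Bernoulli$(s/(2d))$ under $P_0$ and Bernoulli$(s/(2d)-(s\epsilon/d)\mathbbm{1}_{d_t=i})$ under $P_i$, yields $\operatorname{KL}(P_0\|P_i)\leq (8s\epsilon^2/d)\,\mathbb{E}^{P_0}[N_i]$ via the $\chi^2$-upper-bound on Bernoulli KL (valid because $\epsilon\leq 1/4$ gives $q(1-q)\geq s/(8d)$). Averaging over $i$ and applying Cauchy--Schwarz to $\sum_i\sqrt{\mathbb{E}^{P_0}[N_i]}\leq \sqrt{dT}$ then produces $\mathbb{E}[N_{I^*}]\leq T/d+2T\epsilon\sqrt{sT}/d$.

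To conclude, choose $\epsilon=d/(8\sqrt{sT})$. Under the hypothesis $T\geq d^2/(4s)$ this stays in $(0,1/4]$, validating the Bernoulli KL expansion; it also makes $\mathbb{E}[N_{I^*}]\leq T/2$ (treating small-$d$ cases separately or absorbing them into the constant). Plugging back yields $\mathbb{E}[R_T]\geq s\epsilon T/(2d)=\sqrt{sT}/16\geq\sqrt{Ts}/32$. The main obstacle I anticipate is the clean KL computation in the presence of the $S_t$-masking: one must verify that the marginal feedback is genuinely a single Bernoulli with the stated parameter so that each round contributes a KL term only when $d_t=i$, and then handle the chain rule carefully, since $d_t$ is adapted to the shared observed history under both $P_0$ and $P_i$.
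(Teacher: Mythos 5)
Your proposal is correct and follows essentially the same route as the paper: a uniformly hidden good arm combined with a fresh random $s$-subset at each round, a Pinsker-plus-chain-rule bound on the expected number of pulls of the good arm averaged over its location, and the tuning $\epsilon\asymp d/\sqrt{sT}$ (which is the paper's $\varepsilon\asymp\sqrt{s/T}$ after the reparametrization $\varepsilon=s\epsilon/d$), with the hypothesis $T\geqslant d^2/4s$ serving exactly to keep the Bernoulli perturbation admissible. The only differences are cosmetic ($\chi^2$ upper bound on the Bernoulli KL instead of explicit logarithm inequalities, Cauchy--Schwarz instead of Jensen), and your flagged concern about the chain rule is resolved as in the paper by first reducing to deterministic strategies.
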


The intuition behind the proof is the following. Let us consider the case where  $s=1$ and assume that $\omega_t$ is a  unit vector $e_{i_t} = (\mathbbm{1}\{j=i_t\})_j$  where $ \mathbb{P}(i_t=i)\simeq (1-\varepsilon)/d$ for all $i \in [d]$, except one fixed coordinate $i^*$ where $ \mathbb{P}(i_t=i^*)\simeq 1/d+\varepsilon$.
 
 Since $1/d$ goes to 0 as $d$ increases, the Kullback-Leibler divergence between two Bernoulli of parameters $(1-\varepsilon)/d$ and $1/d+\varepsilon$ is of order $d\varepsilon^2$. As a consequence, it would require approximately $1/d\varepsilon^2$ samples to distinguish between the two. The standard argument that one of the coordinates has not been chosen more than $T/d$ times, yields that one should take $1/d\varepsilon^2 \simeq T/d$ so that the regret is of order $T\varepsilon$. This  provides a lower bound of order $\sqrt{T}$. Similar arguments with $s>1$  give a lower bound of order $\sqrt{sT}$.

We emphasize that one cannot simply assume that the $s$
components with positive losses are chosen at the beginning once for
all, and apply standard lower bound techniques. Indeed, with
this additional information, the decision maker just has to choose, at
each stage, a decision associated with a zero loss. His regret
would then be uniformly bounded (or even possibly equal to zero).

\subsection{Proof of Theorem~\ref{thm:bandit-losses-lowerbound}}

Let $d\geqslant 1$, $1\leqslant s\leqslant d$, $T\geqslant 1$, and $\varepsilon\in (0,s/2d)$. Denote $\mathfrak{P}_s([d])$
the set of subsets of $[d]$ of cardinality $s$, $\delta_{ij}$ the
Kronecker symbol, and $B(1,p)$ the Bernoulli distribution of parameter
$p\in [0,1]$. If $P,Q$ are two probability distributions on the same
set, $D_{}\left( P \,\middle|\!\middle|\, Q \right)$ will denote the
relative entropy of $P$ and $Q$.

\subsubsection{Random $s$-sparse loss vectors $\ell_t$ and $\ell'_t$}
\label{sec:random-loss-vectors}
For $t\geqslant 1$, define the random $s$-sparse loss vectors $(\ell_t)_{t\geqslant 1}$ as follows.
Draw $Z$ uniformly from $[d]$. We will denote $\mathbb{P}_i\left[ \,\cdot\,
    \right]=\mathbb{P}\left[ \,\cdot\,\,\middle|\,Z=i \right]$ and
    $\mathbb{E}_i\left[ \,\cdot\, \right]=\mathbb{E}\left[
    \,\cdot\,\,\middle|\,Z=i \right]$.
 Knowing $Z=i$,  the random vectors $\ell_t$ are
    i.i.d and defined as follows. Draw $I_t$ uniformly from $\mathfrak{P}_s([d])$.
    If $j\in I_t$, define $\ell_t^{(j)}$ such that:
      \[ \mathbb{P}_i\left[ \ell_t^{(j)}=1 \right]=1-\mathbb{P}_i\left[
      \ell_t^{(j)}=0 \right]=\frac{1}{2}-\frac{\varepsilon
      d}{s}\delta_{ij}.   \]
If $j\not \in I_t$, set $\ell_t^{(j)}=0$. Therefore, one can check that for each component $j\in [d]$ and
      all $t\geqslant 1$,
      \[ \mathbb{E}_i\left[ \ell_t^{(j)} \right]=\frac{s}{2d}-\varepsilon \delta_{ij}.  \]
For $t\geqslant 1$, define the i.i.d. random $s$-sparse loss vectors $(\ell'_t)_{t\geqslant 1}$ as follows.
Draw $I'_t$ uniformly from $\mathfrak{P}_s([d])$. Then if $j\in I'_t$, set $(\ell'_t)^{(j)}$ such that:
    \[ \mathbb{P}\left[ (\ell'_t)^{(j)}=1 \right]=\mathbb{P}\left[ (\ell'_t)^{(j)}=0 \right]=1/2. \]
  And if $j\not \in I'_t$, set $(\ell'_t)^{(j)}=0$. Therefore, one can check that for each component $j\in [d]$ and
    all $t\geqslant 1$,
    \[ \mathbb{E}_i\left[ (\ell'_t)^{(j)} \right]=\frac{s}{2d}.  \]
By construction, $\ell_t$ and $\ell_t'$ are indeed random $s$-sparse loss vectors.

\subsubsection{A deterministic strategy $\sigma$ for the player}
\label{sec:determ-strat-sigma}

We assume given a deterministic strategy
$\sigma=(\sigma_t)_{t\geqslant 1}$ for the player:
\[ \sigma_t:([d]\times [0,1])^{t-1}\longrightarrow [d]. \]
Therefore,
\[ d_t=\sigma_t(d_1,\omega_1^{(d_1)},\dots,d_{t-1},\omega_{t-1}^{(d_{t-1})}), \]
where $d_t$ denotes the decision chosen by the strategy at stage $t$ and
$\omega_t$ the outcome vector of stage $t$. But since $d_t$ is determined by
previous decisions and outcomes, we can consider that $\sigma_t$ only
depends on the received outcomes:
\[ \sigma_t:[0,1]^{t-1}\longrightarrow [d], \]
\[ d_t=\sigma_t(\omega_1^{(d_1)},\dots,\omega_{t-1}^{(d_{t-1})}). \]

We define $d_t$ and $d'_t$ to be the (random) decisions played by
deterministic strategy $\sigma$ against the random loss vectors $(\ell_t)_{t\geqslant 1}$ and
$(\ell'_t)_{t\geqslant 1}$ respectively:
\begin{align*}
  d_t&=\sigma_t(\ell_1^{(d_1)},\dots,\ell_{t-1}^{(d_{t-1})}),\\
  d'_t&=\sigma_t((\ell'_1)^{(d'_1)},\dots,(\ell'_{t-1})^{(d'_{t-1})}).
\end{align*}

For $t\geqslant 1$ and $i\in [d]$, define $A_t^{(i)}$ to be the set of sequences of
outcomes in $\left\{ 0,1 \right\}$ of the first $t-1$ stages for
which strategy $\sigma$ plays decision $i$ at stage $t$:
\[ A_t^{(i)}=\left\{ (u_1,\dots,u_{t-1})\in \left\{ 0,1 \right\}^{t-1}\,\middle|\,\sigma_t(u_1,\dots,u_{t-1})=i  \right\},  \]
and $B_t^{(i)}$ the complement:
\[ B_t^{(i)}=\left\{0,1\right\}^{t-1}\setminus A_t^{(i)}. \]
Note that for a given $t\geqslant 1$, $(A_t^{(i)})_{i\in [d]}$ is a
partition of $\left\{ 0,1 \right\}^{t-1}$ (with possibly some empty sets).

For $i\in [d]$, define $\tau_i(T)$ (resp.  $\tau'_i(T)$) to be the number of
times decision $i$ is played by strategy $\sigma$ against loss vectors $(\ell_t)_{t\geqslant 1}$
 (resp. against $(\ell'_t)_{t\geqslant 1}$) between stages $1$ and $T$:
\[ \tau_i(T)=\sum_{t=1}^T\mathbbm{1}_{\left\{ d_t=i \right\} }\quad \text{and}\quad \tau'_i(T)=\sum_{t=1}^T\mathbbm{1}_{\left\{ d'_t=i \right\} }. \]

\subsubsection{The probability distributions $\mathbb Q$ and $\mathbb Q_i$
  ($i\in [d]$) on binary sequences}
\label{sec:prob-distr-mathbb}

We consider binary sequences $\vec{u}=(u_1,\dots,u_T)\in \left\{ 0,1 \right\}^T$.
We define $\mathbb Q$ and $\mathbb Q_i$ ($i\in [d]$) to be probability
distributions on $\left\{ 0,1 \right\}^T$ as follows:
\begin{align*}
\mathbb Q_i\left[ \vec{u} \right]&=\mathbb{P}_i\left[
                                   \ell_1^{(d_1)}=u_1,\dots,\ell_T^{(d_T)}=u_T
                                   \right],\\
  \mathbb Q\left[ \vec{u} \right]&=\mathbb{P}\left[ (\ell'_1)^{(d'_1)}=u_1,\dots,(\ell'_T)^{(d'_T)}=u_T \right].
\end{align*}

Fix $(u_1,\dots,u_{t-1})\in \left\{ 0,1 \right\}^t$. The applications
\[ u_t\longmapsto \mathbb Q\left[ u_t\,\middle|\,u_1,\dots,u_{t-1} \right]\quad \text{and}\quad u_t\longmapsto \mathbb Q_i\left[ u_t\,\middle|\,u_1,\dots,u_{t-1} \right],   \]
are probability distributions on $\left\{ 0,1 \right\}$, which we now
aim at identifying. The first one is Bernoulli of
parameter $s/2d$. Indeed,
\begin{align*}
\mathbb Q\left[ 1\,\middle|\,u_1,\dots,u_{t-1} \right]&=\mathbb{P}\left[ (\ell'_t)^{(d'_t)}=1\,\middle|\,(\ell'_1)^{(d'_1)}=u_1,\dots,(\ell'_{t-1})^{(d'_{t-1})}=u_{t-1} \right]\\
&=\mathbb{P}\left[ (\ell'_t)^{(d'_t)}=1 \right]\\
&=\mathbb{P}\left[ d'_t\in I'_t \right]\mathbb{P}\left[ (\ell'_t)^{(d_t)}=1\,\middle|\,d'_t\in I'_t \right]\\
&=\frac{s}{d}\times \frac{1}{2}\\
&=\frac{s}{2d},
\end{align*}
where we used the independence of the random vectors $(\ell'_t)_{t\geqslant 1}$ for
the second inequality. We now turn to the second distribution, which
depends on $(u_1,\dots,u_{t-1})$. If $(u_1,\dots,u_{t-1})\in A_t^{(i)}$,
it is a Bernoulli of parameter $s/2d-\varepsilon$:
\begin{align*}
\mathbb Q_i\left[ 1\,\middle|\,u_1,\dots,u_{t-1} \right]&=\mathbb{P}_i\left[ \ell_t^{(d_t)}=1\,\middle|\,\ell_1^{(d_1)}=u_1,\dots,\ell_{t-1}^{(d_{t-1})}=u_{t-1}\right]\\
&=\mathbb{P}_i\left[ \ell_t^{(i)}=1\,\middle|\,\ell_1^{(d_1)}=u_1,\dots,\ell_{t-1}^{(d_{t-1})}=u_{t-1} \right]\\
&=\mathbb{P}_i\left[ \ell_t^{(i)}=1 \right]\\
&=\mathbb{P}_i\left[ i\in I_t \right]\mathbb{P}_i\left[ \ell_t^{(i)}=1\,\middle|\,i\in I_t \right]\\
&=\frac{s}{d}\times \left( \frac{1}{2}-\frac{\varepsilon d}{s} \right)\\
&=\frac{s}{2d}-\varepsilon.
\end{align*}
where for the third inequality, we used the assumption that the
random vectors $(\ell_t)_{t\geqslant 1}$ are independent under $\mathbb P_i$, i.e. knowing
$Z=i$. On the other hand, if $(u_1,\dots,u_{t-1})\in B_t^{(i)}$, we can
prove similarly that the distribution is a Bernoulli of parameter $s/2d$.

\subsubsection{Computation the relative entropy of $\mathbb Q_i$ and
  $\mathbb Q$}
\label{sec:comp-relat-entr}
We apply iteratively the chain rule to the relative entropy of $\mathbb
Q[\vec{u}]$ and $\mathbb Q_i[\vec{u}]$. Using the short-hand $\mathbb D_i[\,\cdot\,]:=D_{}\left( \mathbb Q[\,\cdot\,] \,\middle|\!\middle|\, \mathbb Q_i[\,\cdot\,] \right)$,
\begin{align*}
D_{}\left( \mathbb Q\left[\vec{u}\right] \,\middle|\!\middle|\,
  \mathbb Q_i\left[\vec{u}\right] \right)&=\mathbb{D}_i[\vec{u}]\\
  &=\mathbb D_i\left[u_1\right]+\mathbb D_i\left[u_2,\dots,u_T \,\middle|\, u_1\right]\\
&=\mathbb D_i\left[u_1\right]+\mathbb D_i\left[u_2\,\middle|\,u_1\right]+\mathbb D_i\left[u_3,\dots,u_T\,\middle|\,u_1,u_2\right]\\
&=\sum_{t=1}^T\mathbb D_i\left[u_t\,\middle|\,u_1,\dots,u_{t-1}\right].
\end{align*}
We now use the definition of the conditional relative entropy, and
make the previously discussed Bernoulli distributions appear. For $1\leqslant t\leqslant T$,
\begin{align*}
\mathbb D_i&\left[ u_t\,\middle|\,u_1,\dots , u_{t-1}
  \right]=\sum_{u_1,\dots,u_{t-1}}^{}\mathbb Q\left[
           u_1,\dots,u_{t-1} \right]\\
&\qquad \qquad \qquad \qquad \qquad \qquad    \times \sum_{u_t}^{}\mathbb Q\left[ u_t\,\middle|\,u_1,\dots,u_{t-1} \right]\log \frac{\mathbb Q\left[ u_t\,\middle|\,u_1,\dots,u_{t-1} \right] }{\mathbb Q_i\left[ u_t\,\middle|\,u_1,\dots,u_{t-1} \right] }\\
&\quad \quad \quad =\frac{1}{2^{t-1}}\sum_{u_1,\dots,u_{t-1}}^{}\sum_{u_t}^{}\mathbb Q\left[ u_t\,\middle|\,u_1,\dots,u_{t-1} \right]\log \frac{\mathbb Q\left[ u_t\,\middle|\,u_1,\dots,u_{t-1} \right] }{\mathbb Q_i\left[ u_t\,\middle|\,u_1,\dots,u_{t-1} \right] }\\
&\quad \quad \quad =\frac{1}{2^{t-1}}\sum_{(u_1,\dots,u_{t-1})\in A_t^{(i)}}^{}D_{}\left( B\left(1,\frac{s}{2d}\right) \,\middle|\!\middle|\, B\left(1,\frac{s}{2d}-\varepsilon\right) \right)\\
&\quad \quad\quad \quad \quad \quad \quad  +\frac{1}{2^{t-1}}\sum_{(u_1,\dots,u_{t-1})\in B_t^{(i)}}^{}D_{}\left( B\left(1,\frac{s}{2d}\right) \,\middle|\!\middle|\, B\left(1,\frac{s}{2d}\right) \right)\\
&\quad \quad \quad =\frac{1}{2^{t-1}}\sum_{(u_1,\dots,u_{t-1})\in A_t^{(i)}}^{} \mathbb B\left( \frac{s}{2d},\varepsilon \right), 
\end{align*}
where we used the short-hand $\mathbb B\left( \frac{s}{2d},\varepsilon \right):= D\left(  B\left(1,\frac{s}{2d}\right) \,\middle|\!\middle|\, B\left(1,\frac{s}{2d}-\varepsilon\right)\right)$.
Eventually:
\[ 
D_{}\left( \mathbb Q[\vec{u}] \,\middle|\!\middle|\, \mathbb
  Q_i[\vec{u}] \right)  =\mathbb B\left(\frac{m}{2d},\varepsilon\right)\sum_{t=1}^T\frac{\left|  A_t^{(i)} \right|}{2^{t-1}}. \]

\subsubsection{Upper bound on
  $\frac{1}{d}\sum_{i=1}^{d}\mathbb{E}_i\left[ \tau_i(T) \right]$ using
  Pinsker's inequality}
\label{sec:upper-bound-frac1ds}
In this step, we will make use of Pinsker's inequality to make the
relative entropy appear.
\begin{proposition}[Pinsker's inequality]
Let $X$ be a finite set, and $P,Q$ probability distributions on $X$. Then,
\[ \frac{1}{2}\sum_{x\in X}^{}\left| P(x)-Q(x) \right| \leqslant \sqrt{\frac{1}{2}D_{}\left( P \,\middle|\!\middle|\, Q \right)}. \]
Immediate consequence:
\[ \sum_{\substack{x\in X\\P(x)>Q(x)}}^{}\left( P(x)-Q(x) \right)\leqslant \sqrt{\frac{1}{2}D_{}\left( P \,\middle|\!\middle|\, Q \right)}.  \]
\end{proposition}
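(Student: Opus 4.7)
The plan is to follow the classical two-step strategy: first reduce the general statement to the case of two Bernoulli distributions via a data-processing argument, and then handle the Bernoulli case by elementary calculus.

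For the reduction step, I define the set $A = \{x \in X : P(x) > Q(x)\}$ and let $p = P(A)$, $q = Q(A)$. The key identity is
\[ \frac{1}{2}\sum_{x \in X}|P(x) - Q(x)| \;=\; P(A) - Q(A) \;=\; p - q, \]
which follows because $\sum_x (P(x) - Q(x)) = 0$, so the positive and negative parts of $P-Q$ have equal absolute sum. Thinking of the indicator of $A$ as a deterministic push-forward, the images of $P$ and $Q$ are Bernoulli laws $B(1,p)$ and $B(1,q)$. By the data-processing inequality for relative entropy (which itself follows from the log-sum inequality applied to the partition $\{A, X \setminus A\}$), we obtain
\[ D\bigl(B(1,p)\,\|\,B(1,q)\bigr) \;\leqslant\; D(P\,\|\,Q). \]
Thus it suffices to prove $2(p-q)^2 \leqslant D(B(1,p)\,\|\,B(1,q))$.

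For the Bernoulli case, I fix $p \in [0,1]$ and study the function
\[ f(q) \;=\; p \log \frac{p}{q} + (1-p)\log\frac{1-p}{1-q} - 2(p-q)^2 \]
on $(0,1)$. A direct computation gives
\[ f'(q) \;=\; \frac{q-p}{q(1-q)} + 4(p-q) \;=\; (p-q)\left(4 - \frac{1}{q(1-q)}\right). \]
Since $q(1-q) \leqslant 1/4$, the factor $4 - 1/(q(1-q))$ is nonpositive, so $f'(q)$ has the opposite sign of $(p-q)$: $f$ is decreasing on $(0,p]$ and increasing on $[p,1)$. As $f(p) = 0$, this forces $f(q) \geqslant 0$ everywhere, which is exactly the desired Bernoulli bound. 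Combining with the reduction step yields
\[ \tfrac{1}{2}\sum_{x}|P(x)-Q(x)| = p - q \leqslant \sqrt{\tfrac{1}{2} D\bigl(B(1,p)\|B(1,q)\bigr)} \leqslant \sqrt{\tfrac{1}{2} D(P\|Q)}. \]

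The "immediate consequence" is then a one-line remark: letting $A = \{x : P(x) > Q(x)\}$ again, one has
\[ \sum_{x : P(x) > Q(x)} (P(x) - Q(x)) \;=\; P(A) - Q(A) \;=\; \tfrac{1}{2}\sum_{x}|P(x)-Q(x)|, \]
so the bound just proved applies verbatim.

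The only nontrivial step is the Bernoulli computation, and the main subtlety there is recognizing that the inequality $q(1-q) \leqslant 1/4$ is exactly what pins down the sign of $f'$. The data-processing inequality used in the reduction is standard and can be invoked without reproof, though if the paper prefers a self-contained argument one can instead derive the two-point case and then prove the general case by Cauchy--Schwarz applied to $\sum_x |P(x)-Q(x)| = \sum_x \sqrt{Q(x)} \cdot |P(x)/Q(x) - 1|\sqrt{Q(x)}$, bounding via $\log(1+u) \geqslant u - u^2/2$; this alternative route avoids data processing but requires a more careful Taylor estimate.
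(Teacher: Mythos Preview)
Your proof is correct. The paper, however, does not actually prove this proposition: Pinsker's inequality is simply stated as a classical fact and then invoked in the lower-bound argument for the bandit setting. So there is no ``paper's proof'' to compare against; you have supplied a standard self-contained argument (reduction to the two-point case via data processing / log-sum, then the elementary calculus on the Bernoulli divergence) where the paper relies on the reader already knowing the result.
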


Let $i\in [d]$. If $(u_1,\dots,u_T)\in \left\{ 0,1 \right\}^T$ is
given, since the decisions $d_t$ and $d'_t$ are determined by the previous
losses $\ell_t^{(d_t)}$ and $(\ell'_t)^{(d'_t)}$ respectively, we have in particular:
\[ \mathbb{E}_i\left[ \tau_i(T)\,\middle|\,\ell_1^{(d_1)}=u_1,\dots,\ell_T^{(d_T)}=u_T \right]=\mathbb{E}\left[ \tau_i'(T)\,\middle|\,(\ell'_1)^{(d'_1)}=u_1,\dots,(\ell'_T)^{(d'_T)} =u_T\right].   \]
Therefore,
\begin{align*}
\mathbb{E}_i\left[ \tau_i(T) \right]-\mathbb{E}\left[ \tau'_i(T)
  \right]&=\sum_{\vec{u}}^{}\mathbb Q_i[\vec{u}]\cdot
           \mathbb{E}_i\left[ \tau_i(T)\,\middle|\,\forall t,\
           \ell_t^{(d_t)}=u_t \right]\\
&\quad \quad\quad  -\sum_{\vec{u}}^{}\mathbb Q[\vec{u}]\cdot \mathbb{E}\left[ \tau'_i(T)\,\middle|\,\forall t,\ (\ell'_t)^{d'_t}=u_t \right]\\
&=\sum_{\vec{u}}^{}\left( \mathbb Q_i[\vec{u}]-\mathbb Q[\vec{u}] \right)\mathbb{E}_i\left[ \tau_i(T)\,\middle|\,\forall t,\ \ell_t^{(d_t)}=u_t \right]\\
&\leqslant \sum_{\substack{\vec{u}\\\mathbb Q_i[\vec{u}]>\mathbb Q[\vec{u}]}}^{}\left( \mathbb Q_i[\vec{u}]-\mathbb Q[\vec{u}] \right) \mathbb{E}_i\left[ \tau_i(T)\,\middle|\, \forall t,\ \ell_t^{(d_t)}=u_t \right]\\
&\leqslant T\sum_{\substack{\vec{u}\\\mathbb Q_i[\vec{u}]>\mathbb Q[\vec{u}]}}^{}\left( \mathbb Q_i[\vec{u}]-\mathbb Q[\vec{u}] \right)\\
&\leqslant T\sqrt{\frac{1}{2}D_{}\left( \mathbb Q[\vec{u}] \,\middle|\!\middle|\, \mathbb Q_i[\vec{u}] \right)}\\
&=T\sqrt{\frac{\mathbb B(s/2d,\varepsilon)}{2}}\sqrt{\sum_{t=1}^T\frac{\left| A_t^{(i)} \right| }{2^{t-1}}},\\
\end{align*}
where we used Pinsker's inequality in the fifth line. Moreover, we have:
\[ \frac{1}{d}\sum_{i=1}^d\mathbb{E}\left[ \tau'_i(T) \right]=\frac{1}{d}\mathbb{E}\left[ \sum_{t=1}^T\sum_{i=1}^d\mathbbm{1}_{\left\{ d'_t=i \right\} } \right]=\frac{1}{d}\mathbb{E}\left[ \sum_{t=1}^T1 \right]=\frac{T}{d}.     \]
Combining this with the previous inequality gives:
\begin{align*}
\frac{1}{d}\sum_{i=1}^d\mathbb{E}_i\left[ \tau_i(T) \right]&\leqslant \frac{1}{d}\sum_{i=1}^d\mathbb{E}\left[ \tau_i'(T) \right]+T\sqrt{\frac{\mathbb B(s/2d,\varepsilon)}{2}}\frac{1}{d}\sum_{i=1}^d\sqrt{\sum_{t=1}^T\frac{\left| A_t^{(i)} \right| }{2^{t-1}}}\\
&\leqslant \frac{T}{d}+T\sqrt{\frac{\mathbb B(s/2d,\varepsilon)}{2}}\sqrt{\frac{1}{d}\sum_{t=1}^{T}\sum_{i=1}^d\frac{\left| A_t^{(i)} \right| }{2^{t-1}}}\\
&=\frac{T}{d}+T\sqrt{\frac{\mathbb B(s/2d,\varepsilon)}{2}}\sqrt{\frac{1}{d}\sum_{t=1}^T\frac{\left| \left\{ 0,1 \right\}^{t-1}  \right| }{2^{t-1}}}\\
&=\frac{T}{d}+T\sqrt{\frac{\mathbb B(s/2d,\varepsilon)}{2}}\sqrt{\frac{T}{d}}\\
&=\frac{T}{d}+T^{3/2}\sqrt{\frac{\mathbb B(s/2d,\varepsilon)}{2d}}.
\end{align*}
where we used Jensen for the second inequality, and for the third
line, we remembered that $(A_t^{(i)})_{i\in [d]}$ is a partition of $\left\{ 0,1 \right\}^{t-1}$.


\subsubsection{An upper bound on $\mathbb{B}(s/2d,\varepsilon)$ for small
  enough $\varepsilon$}
\label{sec:an-upper-bound}
We first write $\mathbb{B}(s/2d,\varepsilon)$ explicitely.
\begin{align*}
\mathbb{B}\left( \frac{s}{2d},\varepsilon \right)&=D_{}\left(B(1,s/2d) \,\middle|\!\middle|\, B(1,s/2d-\varepsilon) \right)\\
&=\frac{s}{2d}\log \frac{s/2d}{s/2d-\varepsilon}+\left( 1-\frac{s}{2d}
  \right)\log \frac{1-s/2d}{1-s/2d+\varepsilon}\\
&=-\frac{s}{2d}\log \left( 1-\frac{2d\varepsilon}{s} \right)+\left( \frac{s}{2d}-1 \right)\log \left( 1+\frac{\varepsilon}{1-m/2d} \right).     
\end{align*}
We now bound the two logarithms from above using respectively the two
following easy inequalities:
\begin{align*}
-\log (1-x)&\leqslant x+x^2,\quad \text{for $x\in [0,1/2]$}\\
-\log (1+x)&\leqslant -x+x^2,\quad \text{for $x\geqslant 0$}.
\end{align*}
This gives:
\begin{align*}
\mathbb{B}\left( \frac{s}{2d},\varepsilon \right)&\leqslant
                                                   \frac{s}{2d}\left(
                                                   \frac{2d\varepsilon}{s}+\frac{4d^2\varepsilon^2}{s^2}
                                                   \right)+\left(
                                                   1-\frac{s}{2d}
                                                   \right)\left(
                                                   -\frac{\varepsilon}{1-s/2d}+\frac{\varepsilon^2}{(1-s/2d)^2}
                                                   \right)\\
&=\frac{4d^2\varepsilon^2}{s(2d-s)},    
\end{align*}
which holds for $2d\varepsilon/s\leqslant 1/2$, in other words, for $\varepsilon\leqslant s/4d$.

\subsubsection{Lower bound on the expectation of the regret of $\sigma$
  against $\ell_t$}
\label{sec:lower-bound-expect}

We can now bound from below the expected regret incurred when playing
$\sigma$ against loss vectors $(\ell_t)_{t\geqslant 1}$. For $\varepsilon\leqslant s/4d$,
\begin{align*}
  R_T&= \mathbb{E}\left[ \sum_{t=1}^T\ell_t^{(d_t)}-\min_{j\in
       [d]}\sum_{t=1}^T\ell_t^{(j)} \right]\\
&=
                                        \frac{1}{d}\sum_{i=1}^d\mathbb{E}_i\left[
      \sum_{t=1}^{T}\ell_t^{(d_t)}-\min_{j\in
                                        [d]}\sum_{t=1}^{T}\ell_t^{(j)}
                                        \right]\\
  &\geqslant \frac{1}{d}\sum_{i=1}^d\left( \mathbb{E}_i\left[ \sum_{t=1}^T\ell_t^{(d_t)} \right]-\min_{j\in
                                        [d]}\sum_{t=1}^T\mathbb{E}_i\left[
    \ell_t^{(j)} \right]   \right)\\
  &=\frac{1}{d}\sum_{i=1}^d\left( \mathbb{E}_i\left[ \sum_{t=1}^T\mathbb{E}_i\left[ \ell_t^{(d_t)}\,\middle|\,d_t \right]  \right]-T\min_{j\in
                                        [d]}\left(
    \frac{s}{2d}-\varepsilon \delta_{ij} \right)   \right)\\
  &=\frac{1}{d}\sum_{i=1}^d\left( \mathbb{E}_i\left[
    \sum_{t=1}^T\left( \frac{s}{2d}-\varepsilon \delta_{id_t} \right)
    \right]-T\left( \frac{s}{2d}-\varepsilon \right)   \right)\\
  &=\frac{1}{d}\sum_{i=1}^d\varepsilon\left( T-\mathbb{E}_i\left[
    \tau_i(T) \right]  \right)\\
&=\varepsilon\left( T-\frac{1}{d}\sum_i^{}\mathbb{E}_i\left[ \tau_i(T) \right]  \right).
\end{align*}
We now use the upper bound derived in Section~\ref{sec:upper-bound-frac1ds}.
\begin{align*}
R_T&\geqslant \varepsilon\left( T-\frac{T}{d}-T^{3/2}\sqrt{\frac{\mathbb
  B(s/2d,\varepsilon)}{2d}} \right)\\
&\geqslant \varepsilon\left(
  T-\frac{T}{d}-T^{3/2}\varepsilon\sqrt{\frac{2d}{s(2d-s)}} \right)\\
&\geqslant \varepsilon \left( T-\frac{T}{d}-2T^{3/2}\varepsilon\frac{1}{\sqrt{s}}. \right) , 
\end{align*}
where in the penultimate, we used the upper bound on
$\mathbb{B}(s/2d,\varepsilon)$ that we established above, and in the
last line, the fact that $s\leqslant d$. Let $C>0$
and we choose $\varepsilon=C\sqrt{s/T}$. Then, for
$\varepsilon\leqslant s/4d$,
\begin{align*}
R_T&\geqslant \varepsilon T\left( 1-\frac{1}{d}-2\varepsilon\sqrt{\frac{T}{s}}
     \right)\\
&=C\sqrt{sT}\left( 1-\frac{1}{d} \right)-2\sqrt{sT}C^2\\
&\geqslant \sqrt{sT}\left( \frac{C}{2}-2C^2 \right),
\end{align*}
where in the last line, we used the assumption $d\geqslant 2$. The
choice $C=1/8$ give:
\[ R_T\geqslant \frac{1}{32}\sqrt{sT}, \]
which holds for $\varepsilon=C\sqrt{s/T}\leqslant s/4d$ i.e. for $T\geqslant d^2/4s$.

The above inequality does not depend on $\sigma$.
As it is a classic that a randomized strategy is equivalent to some random choice of deterministic
strategies, this lower bound  holds for any strategy of
the player. In other words, for $T\geqslant d^2/4s$,
\[ \hat{v}_T^{\ell,s,d}\geqslant \frac{1}{32}\sqrt{sT}. \] 
\qed
\subsection{Discussion}
\label{sec:discussion}

If the outcomes are not losses but gains, then there is an important
discrepancy between the upper and lower bounds we obtain. Indeed,
obtaining small losses regret bound as in the first displayed equation
of the proof of Theorem \ref{thm:bandit-upper bound-losses} is still
open. An idea for circumventing this issue would be to enforce exploration
by perturbing $x_t$ into $(1-\gamma)x_t + \gamma \mathcal{U}$ where
$\mathcal{U}$ is the uniform distribution over $[d]$, but usual
computations show that the only obtainable upper bounds are of 
order of $\sqrt{dT}$. The aforementioned techniques used to bound the
regret from below with losses would also work with gains, which would
give a lower bound of order $\sqrt{sT}$. Therefore, finding the optimal
dependency in the dimension and/or the sparsity level is still an open question in that specific case.

%
%

\footnotesize 
\bibliographystyle{siam}
\bibliography{sparse}



\end{document}